\def\BibTeX{{\rm B\kern-.05em{\sc i\kern-.025em b}\kern-.08em
    T\kern-.1667em\lower.7ex\hbox{E}\kern-.125emX}}
\theoremstyle{plain}
\newtheorem{theorem}{Theorem}[section]
\newtheorem{lemma}[theorem]{Lemma}
\theoremstyle{definition}
\newtheorem{definition}[theorem]{Definition}
\newtheorem{assumption}[theorem]{Assumption}
\theoremstyle{remark}
\DeclareMathOperator*{\argmax}{argmax\,}
\begin{document}

\title{Classification with a Network of Partially Informative Agents: Enabling Wise Crowds from Individually Myopic Classifiers
\thanks{This research was supported in part by NSF CAREER award 1653648. It was also based upon work supported by the Office of Naval Research (ONR) via Contract No. N00014-23- C-1016 and under subcontract to Saab, Inc. as part of the TSUNOMI project. Any opinions, findings and conclusions or recommendations expressed in this material are those of the author(s) and do not necessarily reflect the views of ONR, the U.S. Government, or Saab, Inc.}
}

\author{
\IEEEauthorblockN{Tong Yao}
\IEEEauthorblockA{\textit{School of Electrical and Computer Engineering} \\
\textit{Purdue University}\\
West Lafayette, IN \\
yao127@purdue.edu}
\and
\IEEEauthorblockN{Shreyas Sundaram}
\IEEEauthorblockA{\textit{School of Electrical and Computer Engineering} \\
\textit{Purdue University}\\
West Lafayette, IN \\
sundara2@purdue.edu}
}

\maketitle

\begin{abstract}
We consider the problem of classification with a (peer-to-peer) network of heterogeneous and partially informative agents, each receiving local data generated by an underlying true class, and equipped with a classifier that can only distinguish between a subset of the entire set of classes.
We propose an iterative algorithm that uses the posterior probabilities of the local classifier and recursively updates each agent's local belief on all the possible classes, based on its local signals and belief information from its neighbors. We then adopt a novel distributed min-rule to update each agent’s global belief and enable learning of the true class for all agents. We show that under certain assumptions, the beliefs on the true class converge to one asymptotically almost surely. We provide the asymptotic convergence rate, and demonstrate the performance of our algorithm through simulation with image data and experimented with random forest classifiers and MobileNet.
\end{abstract}

\begin{IEEEkeywords}
distributed estimation, multi-agent system, online inference
\end{IEEEkeywords}

\section{INTRODUCTION}
With the improvement of computation and communication technologies comes a new paradigm of problem solving with distributed intelligent agents.
For example, Internet of Things (IoT) \cite{iot} retrieve information from sensors and process it at a centralized server. 
Edge Computing \cite{edge} allows data produced by IoT devices to be processed locally, taking advantage of distributed computing to reduce communication latency to the hub and to improve data security. 

Classification over a distributed network of intelligent sensor agents is essential to many applications, such as object or image identification \cite{CV}, anti-spam networks \cite{distclassKDD}, object tracking, etc. In these applications, the agents receive and process signals in real time. However, the distributed agents can be restricted by their computation and communication capabilities. Thus, given the private signals that each sensor collects, it is important to design efficient online (i.e., real-time) algorithms for the information of each agent to be integrated over time and propagated through the network. 

In addition to communication and computation challenges, distributed agents often possess partial knowledge and must make decisions within constraints. For example, in third-generation surveillance systems \cite{3rdgensurv} with a large number of monitoring points, camera sensors are limited by their field of view \cite{visonsurv}. Environmental and industrial monitoring involve agents with diverse sensor types collecting various data \cite{envsurv}. Multi-agent object recognition, first introduced by \cite{objrecgearly}, utilizes low-cost agents identifying only a single class, to achieve wide-range recognition by increasing the number of agents and including human agents \cite{objrecg}. In activity recognition scenarios, various sensors, each limited in its ability to obtain full information, e.g., accelerometer, gyroscope, and magnetometer in mobile phones, are employed to classify human activities, and vehicle sensors are utilized to classify vehicles \cite{smith2017federated}.

Modern machine learning models, such as deep neural networks with millions or billions of parameters (e.g., \cite{VGG}, \cite{resnet}), can achieve high performance in classification tasks for a large number of classes. However, they are very expensive to be utilized in an online setting, requiring a large amount of training and inference resources. The constraints of computational resources make the learning and the implementation of such machine learning models very challenging on distributed networks of sensors, such as robotic networks, wireless sensor networks, or IoT. Additionally, some models are pre-trained and developed, but might only be adequate for a subset of the given classification tasks. With the limitations described above, we ask: Can we utilize a network of partially informative machine learning models, each specialized in distinguishing a small number of classes, to achieve real-time identification of the true class in the entire network?

\subsection*{Contributions}
In this paper, we aim to address the online distributed classification problem where heterogeneous agents are limited in computational resources and partially informative, i.e., their local classifiers only distinguish between a subset of classes but provide no information on the classes outside that subset. 

We propose a local update rule that can be applied with any local classifier generating posterior probabilities on the set of possible classes. Our local update rule incorporates data arriving in real-time to enhance the stability of estimation performance and robustness to noise. 

We leverage the latest concepts from non-Bayesian distributed hypothesis testing and adopt a novel min-based global update rule. Utilizing only the partially informative belief vectors, each agent can asymptotically identify the true class given observations over time.
We analyze the asymptotic convergence of the beliefs of agents, demonstrating that they collectively reject the false classes exponentially fast. 
Finally, we show by simulation that our proposed approach outperforms other aggregation rules such as average and maximum.

\subsection*{Related Literature}
\subsubsection*{Distributed non-Bayesian social learning}
Distributed non-Bayesian learning considers the problem of identifying the true class over a network of distributed (peer-to-peer) agents. Each agent maintains a belief vector over a set of hypotheses and updates the beliefs sequentially. An agent's belief update is considered non-Bayesian as it treats the beliefs generated through interacting with neighbors as Bayesian priors rather than conditioning on all information available \cite{JADBABAIE2012210}. 

Prior works assume each agent has exact and complete knowledge of the local likelihood functions of all classes, known as the private signal structure (e.g., \cite{JADBABAIE2012210,non-bayesian, Lalitha18, Aritra_min}). Other works attempt to estimate these likelihood functions (e.g., \cite{Uribe}). These assumptions necessitate domain knowledge of the generative mechanism and all sensor characteristics and can cause model misspecifications or introduce additional uncertainties. Distributed non-Bayesian learning then considers scenarios where subsets of classes are observationally equivalent (i.e., the conditional likelihood distributions of given signals are identical) at each agent, which necessitates cooperative decision-making to identify the true class.

Our problem setup is different from the existing work in distributed non-Bayesian social learning. Instead of relying on complete knowledge of the likelihood function and signal structure, our work leverages advancements in machine learning and directly utilizes the posterior probability provided by classifiers to identify the true class. These classifiers can include both discriminative models (such as random forests and neural networks, which often outperform generative models) and generative models that utilize likelihood functions (such as Naive Bayes). 
We assume that each agent is partially informative, i.e., each can provide information and distinguish between only a subset of classes while providing no information on the others. We expand upon these critical differences in more detail in Section \ref{sec: proposed algorithm}.

\subsubsection*{Distributed classification}
In their work of distributed classification, \cite{kotecha2005distributed, predd2006distributed} consider that information from all sensors is gathered at a fusion center. In \cite{kokiopoulou2010distributed}, a fully distributed consensus-based approach is proposed considering multi-observations of the same object and proposes a non-parametric approach; other works, such as \cite{forero2010consensus}, cast the distributed learning problem as a set of decentralized convex optimization subproblems but are classifier specific and cannot be used for complex models such as deep neural networks or heterogeneous models. 

Distributed learning is a critical research field, especially in the context of multi-agent systems where a group of agents collaborates to achieve a shared goal. Our problem uniquely considers a fully distributed network of agents aiming to discern the true class of the world using their local partially informative classifier of any type, through communication and cooperation with neighboring agents.

\section{PROBLEM FORMULATION} \label{sec: problem definition}
\subsection{Observation and Agent Model}
{We begin by defining $\Theta = \{\theta_1,\theta_2,\ldots,\theta_m\}$ as the set of $m \in \mathbb{N}$ possible classes of the world, where each $\theta_i \in \Theta$ is called a class. At each time-step $t \in \mathbb{N}$, $n \in \mathbb{N}$ data points $\{x_{1,t},\ldots, x_{n,t}\}$ are generated from an unknown true class $\theta^* \in \Theta$. Each data point $x_{i,t} \in \mathcal{X} \subset \mathbb{R}^d$ is an input vector within a $d$-dimensional $(d\in\mathbb{N})$ finite input space. We assume that these data points are identical and independently distributed across time. However, at a given time step, the data points may be correlated.}

Consider a group of $n \in \mathbb{N}$ agents (e.g., robots, sensors, or people). At each time step $t$, each agent $i$ observes a private data sample $x_{i,t} \in \mathcal{X}$. 
Each agent knows the set of possible classes $\Theta$, but due to limitations (such as during training), each agent can only distinguish a subset of the classes $\Theta_i \subseteq \Theta$, while providing no information on the other classes $\Theta \setminus \Theta_i$ (this is formalized in the next subsection). An agent is considered partially informative if it can only distinguish a proper subset of all possible classes.

\begin{definition}(Partially Informative)
An agent $i$, equipped with classifier $f_i$, is partially informative if $\Theta_i \subset \Theta$.
\end{definition}

Each agent $i$ is equipped with a locally pre-trained approximated mapping function (classifier) $f_i: \mathcal{X}\rightarrow\mathcal{P}_i$, where $\mathcal{P}_i \subset \mathbb{R}^{|\Theta_i|}$ is the probability measure such that $\sum_{\theta \in \Theta_i} p_i(\theta|x) = 1.$ 
This classifier transforms the input $x \in \mathcal{X}$ into posterior probabilities $\{p_i(\theta|x) \in \mathcal{P}_i: \forall \theta \in \Theta_i$\}, which represents the probability of class $\theta \in \Theta_i$ given the observed input $x$.

These agents communicate via an undirected graph $\mathcal{G}_a = (\mathcal{V}_a,\mathcal{E}_a)$, where $\mathcal{V}_a = \{1,2,\ldots,n\}$ is the set of vertices representing the agents and $\mathcal{E}_a \subseteq \mathcal{V}_a \times \mathcal{V}_a$ is the set of edges. An edge $({i},{j}) \in \mathcal{E}_a$ indicates that agent $i$ and $j$ can communicate with each other. The neighbors of agent $i \in \mathcal{V}_a$, including agent $i$ itself, are represented by the set $\mathcal{N}_i = \{j \in \mathcal{V}_a: (i,j) \in \mathcal{E}_a\} \cup \{i\}$, and is termed the inclusive neighborhood of agent $i$. We assume the communication graph $\mathcal{G}_a$ is connected and time-invariant. \footnote{For broader assumptions regarding communication topology, we direct readers to \cite{Aritra_min}.}

Our objective in this work is to design distributed learning rules that allow each agent $i\in \mathcal{V}$, equipped with a partially informative classifier $f_i$ to identify the true class $\theta^*$ of the world asymptotically almost surely by communicating and collaborating with its neighbors.

\subsection{Quality of Classifiers}
Considering the capabilities of each agent and its corresponding local classifier, we introduce the following notations to describe the distinct roles of agents.

\begin{definition}\label{def: discriminant}
The \textbf{discriminative score} used to evaluate the capability of a classifier between two classes $\theta_p \in \Theta_i$ and $\theta_q \in \Theta_i$ is given by
\begin{equation}\label{eqn: KL like}
D_i(\theta_p,\theta_q)
\triangleq \sum_{x \in \mathcal{X}} p_i(x|\theta_p)\log\frac{p_i(\theta_p|x)/p_i(\theta_p)}{p_i(\theta_q|x)/p_i(\theta_q)}.
\end{equation} 
\end{definition}

Here, $p_i(x|\theta_p)$ is the likelihood of seeing data $x$ given that class $\theta_p \in \Theta_i$ is true, and $\forall \theta \in \Theta_i$, $p_i(\theta)$ is the prior probability of class $\theta$ being true without any conditions. To ensure the definition is valid, we assume that the posterior and prior probability of each agent is non-zero, i.e., $p_i(\theta|x) > 0, p_i(\theta) > 0, \forall \theta \in \Theta$ and $\forall x \in \mathcal{X}$. 
This discriminative score can be interpreted as the expected information per sample in favor of $\theta_p$ over $\theta_q$ when $\theta_p$ is true. 

% Realworld application estimation of likelihood 

Using the discriminative score, we define a source agent.
\begin{definition}
An agent $i$ is said to be a \textbf{source agent} for a pair of distinct classes $\theta_p,\theta_q \in \Theta$ if the discriminative score $D_i(\theta_p,\theta_q) > 0$. The set of source agents for $\theta_p,\theta_q$ is denoted as $\mathcal{S}(\theta_p,\theta_q)$.
An agent $i$ is said to be a source agent for a set of classes $\Theta_i \subseteq \Theta$ if agent $i$ is a source agent for all pairs of $\theta_p, \theta_q \in \Theta_i.$
\end{definition}

A source agent for a pair $\theta_p, \theta_q \in \Theta$ can {\it distinguish} between the pair of classes $\theta_p,\theta_q$ using its private signals $x$ and its posterior $p_i(\theta|x)$ obtained through its mapping function, i.e., classifier $f_i$. We assume each agent can distinguish between the classes in $\Theta_i$, as follows.
\begin{assumption}\label{aspt: theta_i}
    Each agent $i \in \mathcal{V}_a$ is a source agent of classes $\Theta_i \subseteq \Theta$ and $\Theta_i \neq \emptyset$.
\end{assumption}

The agents can obtain the approximated mapping function $f_i$ through discriminative methods or generative methods. Classifiers that do not directly support probability predictions, such as Support Vector Machine and k-Nearest Neighbors, can use calibration methods such as \cite{isotonic, logistic} to obtain the probability for each respective label. 

In order for all agents to identify the true class, for each pair of classes, there must exist at least one agent in the network who can distinguish that pair. In our distributed and partially informative scenario, we consider the following conditions.

\begin{assumption}(Global Identifiability)\label{aspt: global ident}[\cite{Aritra_min}]
For each pair $\theta_p,\theta_q \in \Theta$ such that $\theta_p \neq \theta_q$, the set $\mathcal{S}(\theta_p,\theta_q)$ of agents that can distinguish between the pair $\theta_p,\theta_q$ is non-empty.
\end{assumption}
The global identifiability assumption is necessary under independent signals and is standard in related social learning literature (e.g., \cite{JADBABAIE2012210}, ensuring no class $\theta \neq \theta^*$ is observationally equivalent to $\theta^*$ for all agents in the network.

Additionally, we define the confusion score of agent $i$, which captures the following: given data generated by class $\theta^* \not\in\Theta_i$, whether agent $i$ believes the data belongs to class $\theta_p \in \Theta_i$ or $\theta_q \in \Theta_i$.
\begin{definition}\label{def: confusion score}
The \textbf{confusion score} used to evaluate the capability of a classifier between two classes $\theta_p, \theta_q \in \Theta_i$ and $\theta^* \not \in \Theta_i$ is given by
\begin{equation}\label{eqn: confusion score}
D_i^{\theta^*}(\theta_p,\theta_q)
= \sum_{x \in \mathcal{X}} p_i(x|\theta^*)\log\frac{p_i(\theta_p|x)/p_i(\theta_p)}{p_i(\theta_q|x)/p_i(\theta_q)}.
\end{equation} 
\end{definition}
If $D_i^{\theta^*}(\theta_p,\theta_q) > 0$, agent $i$ thinks $\theta_p$ is more likely to be the true class over $\theta_q$, and thus rejects $\theta_q$ from the set of possible candidate classes of being the true class. If $D_i^{\theta^*}(\theta_p,\theta_q) < 0$, agent $i$ rejects $\theta_p$ from the set of possible candidate classes of being the true class. If $D_i^{\theta^*}(\theta_p,\theta_q) = 0$, agent $i$ cannot reject either $\theta_p$ or $\theta_q$.

% likelihood estimation
Note that the discriminative score and confusion score defined above are not required for our algorithm to obtain estimates, but provide information and performance guarantees on the models.
To estimate the scores, one needs knowledge of the likelihood function $p_i(x|\theta)$ which is sometimes unknown in real-world applications. In practice, the likelihood function can be estimated via sampling while fixing the class. 

% To estimate the discriminative power of a classifier, assuming each sample is uniformly distributed give the class $\theta_p$, we consider all samples of $\theta_p$ in a validation set, denoted as $\mathcal{D}_p$, as 
% \begin{equation}\label{eqn: KL estimate}
% D_i(\theta_p,\theta_q)
% \approx \sum_{x \in \mathcal{D}_p} \frac{1}{|\mathcal{D}_p|}\log\frac{p_i(\theta_p|x)/p_i(\theta_p)}{p_i(\theta_q|x)/p_i(\theta_q)}.   
% \end{equation}

% Similar to discriminative score, the confusion score of an agent between two classes $\theta_p \neq \theta^*$ and $\theta_q \neq \theta^*$ can be estimated using data points in a validation set belonging to class $\theta^*$, denoted as $\mathcal{D}^*$. Assuming the data points are uniformly distributed in the validation set, we estimate the confusion score by
% \begin{equation}\label{eqn: confusion estimate}
% D_i^{\theta^*}(\theta_p,\theta_q)
% \approx \sum_{x \in \mathcal{D}^*} \frac{1}{|\mathcal{D}^*|}\log\frac{p_i(\theta_p|x)/p_i(\theta_p)}{p_i(\theta_q|x)/p_i(\theta_q)}.  
% \end{equation}

Based on the above discussion, we define a support agent who can assist in rejecting false classes.
\begin{definition} \label{def: support}
Consider an agent $i$ such that $\theta^* \not \in \Theta_i$. The agent $i$ is said to be a \textbf{support agent} for a class $\theta \in \Theta_i$ if there exists some class $\hat{\theta} \in \Theta_i \setminus \{\theta\}$, such that the confusion score $D_i^{\theta^*}(\hat{\theta},\theta) > 0.$ We denote the set of support agents who can reject $\theta$ as $\mathcal{U}^{\theta^*}(\theta)$.
\end{definition}

% \fi

\section{Proposed Learning Rules} \label{sec: proposed algorithm}
\begin{figure}
    \centering
    \includegraphics[width = \linewidth]{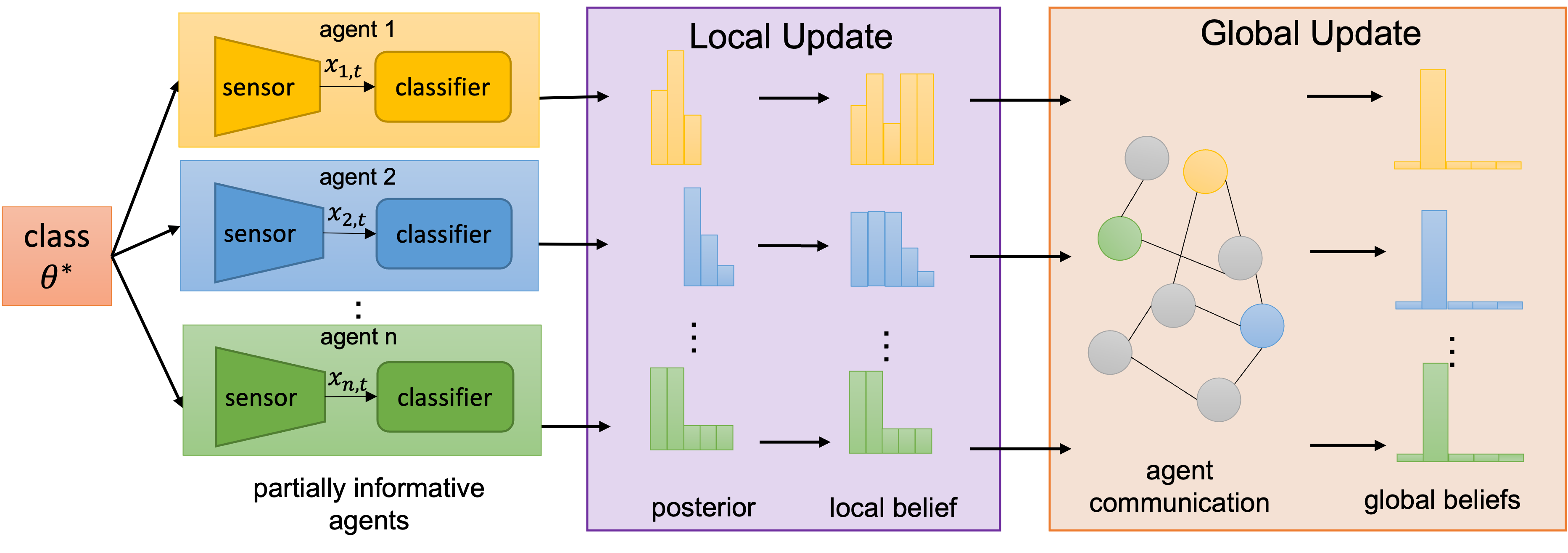}
    \caption{{Demonstration of one time step of the proposed multi-agent classification algorithm.}}
    \label{fig: overview}
\end{figure}
In this section, we propose our local and global update rules. We include an overview of the update rules in Fig. \ref{fig: overview}. First, we propose the local update rule for each agent to update its local belief vectors $\pi_{i,t}$ for all $\theta \in \Theta$, and discuss how to address the issue with the partially informative classifier of each agent. Subsequently, we describe the communication and update rules for global belief vectors $\mu_{i,t}$ of each agent. 

\subsection{Local Update of Posterior}
{In this subsection, we first present the derivation leading to the local updates and subsequently discuss modifications that account for the partial information available to each agent.}

Standard non-Bayesian social learning assumes that each agent has knowledge of its local likelihood functions $\{p_i(\cdot|\theta_k)\}_{k=1}^m$ (e.g., \cite{Aritra_min}). However, to obtain the likelihood functions, one has to make assumptions on the underlying distributions of the data. Here, we propose a method that directly uses the posterior probabilities from classifiers $f_i$. 

We let $p_i(\theta|x_{i,t})$ be the posterior probability from the classifier of agent $i$, upon seeing private data $x_{i,t}$ at time step $t$. Note that since each agent $i$ is partially informative (i.e., $\Theta_i \subset \Theta$), the probability vector has dimension $|\Theta_i|$. 
We use the idea of adjusting a classifier to a new priori probability from \cite{saerens2002adjusting} and recursively apply it to the outcomes of the classifier.
Suppose agent $i$ receives a new data point $x_{i,t}$ at time-step $t$. According to Bayes' theorem, the likelihood of observing $x_{i,t}$ given class $\theta$ is characterized by 
\begin{equation}\label{eqn: train post}
    p_{i}(x_{i,t}|\theta) = \frac{ p_{i}(\theta|x_{i,t})p_{i}(x_{i,t})}{p_{i}(\theta)}, \forall \theta \in \Theta_i,
\end{equation} where we directly obtain the posterior probability $p_{i}(\theta|x_{i,t})$ from the trained classifier of agent $i$, $p_i(\theta)$ is the probability of observing class $\theta$ without any conditions, and $p_i(x_{i,t})$ is the probability of observing data $x_{i,t}$ without any conditions. 

Suppose we want to utilize prior knowledge and adjust the posterior probabilities $\tilde{p}_{i}(\theta|x_{i,t})$ with a new prior distribution $\tilde{p}_i(\theta)$. The adjusted posterior probability $\tilde{p}_{i}(\theta|x_{i,t})$ also obeys Bayes' theorem, with a new prior $\tilde{p}_{i}(\theta)$ and a new probability function $\tilde{p}_{i}(x_{i,t})$, i.e.,
\begin{equation}\label{eqn: adjusted post}
    \tilde{p}_{i}(\theta|x_{i,t}) = \frac{\tilde{p}_{i}(x_{i,t}|\theta)\tilde{p}_{i}(\theta)}{\tilde{p}_{i}(x_{i,t})}, \forall \theta \in \Theta_i. 
\end{equation} 
We assume the likelihood distribution of the underlying generation mechanism does not change, i.e., $\tilde{p}_i(x_{i,t}|\theta) = p_i(x_{i,t}|\theta)$. By substituting \eqref{eqn: train post} into \eqref{eqn: adjusted post} and defining $g_i(x_{i,t}) \triangleq p_i(x_{i,t})/\tilde{p}_i(x_{i,t})$, we obtain
\begin{equation}
    \tilde{p}_{i}(\theta|x_{i,t}) = p_i(\theta|x_{i,t})g_i(x_{i,t}) \frac{\tilde{p}_i(\theta)}{p_i(\theta)}, \forall \theta \in \Theta_i.
\end{equation}
Since $\sum_{\theta \in \Theta_i} \tilde{p}_i(\theta|x_{i,t}) = 1$, we obtain $g_i(x_{i,t}) = 1/(\sum_{k = 1}^{|\Theta_i|}p_i(\theta_k|x_{i,t})\frac{\tilde{p}_i(\theta_k)}{p_i(\theta_k)})$ and consequently, the adjusted posterior probability $\tilde{p}_{i}(\theta|x_{i,t})$ is
\begin{equation}\label{eqn: final adjust post}
    \tilde{p}_{i}(\theta|x_{i,t}) = \frac{p_i(\theta|x_{i,t})\frac{\tilde{p}_i(\theta)}{p_i(\theta)}}{\sum_{k = 1}^{|\Theta_i|} p_i(\theta_k|x_{i,t})\frac{\tilde{p}_i(\theta_k)}{p_i(\theta_k)}}, \forall \theta \in \Theta_i.
\end{equation}

With the above discussion on adjusting posterior probabilities given a new prior, we now describe how to update the local beliefs of each agent with partially informative classifiers. 
We define the local belief $\pi_{i,t}(\theta) \triangleq \tilde p_{i}(\theta|x_{i,1},\ldots,x_{i,t}), \forall i \in \mathcal{V}_a$ as the posterior probability of $\theta \in \Theta_i$ after seeing all the data up to time $t$.
At each iteration $t$, since we assume the data are i.i.d. given the class, each agent can incorporate its previously adjusted belief $\pi_{i,t-1}(\theta)$ as a prior and uses the posterior $p_i(\theta|x_{i,t})$, from its classifiers $f_i$, to update the belief $\pi_{i,t}$. Substituting $\tilde{p}_i(\theta) = \pi_{i,t-1}(\theta)$ into the numerator of \eqref{eqn: final adjust post} and ignoring the effect of the denominator for now, we obtain the unnormalized local belief $\hat{\pi}_{i,t}$ as 
\begin{equation}\label{eqn: local update pi hat}
    \hat{\pi}_{i,t}(\theta) =\frac{ p_{i}(\theta|x_{i,t})}{p_i(\theta)}\pi_{i,t-1}(\theta), \forall \theta \in \Theta_i.
\end{equation}
Intuitively, if seeing the data point $x_{i,t}$ improves the posterior probability of class $\theta \in \Theta_i$, i.e., $p_i(\theta|x_{i,t}) > p_i(\theta)$, the local belief $\hat{\pi}_{i,t}(\theta)$ is increased, compared to its previous local belief $\pi_{i,t-1}(\theta)$. On the other hand, if receiving data $x_{i,t}$ suggests the true class is less likely to be $\theta \in \Theta_i$, i.e., $p_i(\theta|x_{i,t}) < p_i(\theta)$, the local belief $\hat{\pi}_{i,t}(\theta)$ is decreased, compared to its previous local belief $\pi_{i,t-1}(\theta)$. 
% The prior probabilities of the classes $p_i(\theta), \forall \theta \in \Theta_i$ can be estimated by the ratio of the number of training data in class $\theta$ over the number of all the training data of $i$. 

Since agents are partially informative, in the case that $\theta \in \Theta \setminus \Theta_i$, we let
\begin{equation}\label{eqn: max fill}
    \hat{\pi}_{i,t}(\theta) = \max_{\theta_k \in \Theta_i} \hat{\pi}_{i}(\theta_k), \forall \theta \in \Theta \setminus \Theta_i. 
\end{equation}
The intuition of this modification is that agent $i$ will have a weaker belief on the classes that are less likely to be the true class, informed by its classifier; agent $i$ considers that the classes it cannot identify to be equally likely (i.e., with the same probabilities) of being the true class. This modification ensures that agent $i$ leaves open the possibility that the true class is one that it cannot identify based on its own classifier. 

We apply normalization, such that $\sum_{\theta \in \Theta} \pi_{i,t}(\theta) = 1,$ and obtain
\begin{equation}\label{eqn: local update}
    \pi_{i,t}(\theta) = \frac{\hat{\pi}_{i,t}(\theta)}{\sum_{k = 1}^{m} \hat{\pi}_{i,t}(\theta_k)}, \forall \theta \in \Theta.
\end{equation}
The local update incorporates only the private observations and beliefs, without any network influence. We will show in Section \ref{sec: analysis} that for a source agent $i \in \mathcal{S}(\theta^*,\theta)$, $\pi_{i,t}(\theta) \to 0$ almost surely. However, without communication with neighbors, agents cannot identify $\theta^*$ just yet due to their partially informative classifiers. We will utilize the global update to propagate the beliefs such that every agent can identify the true class $\theta^*$.
\subsection{Global Update}\label{sec: global}
We define the global belief vector of an agent $i$ on class $\theta \in \Theta$ at time $t$ to be $\mu_{i,t}(\theta)$. Each agent determines the final estimation result using the global belief vector. At each time step $t$, once the local beliefs $\pi_{i,t}$ of all agents $i \in \mathcal{V}_a$ are updated, all agents perform a round of global update as in \cite{Aritra_min} to update their global belief vector $\mu_{i,t}$ as follows:
\begin{equation}\label{eqn: global update}
    \mu_{i,t}(\theta) = \frac{\min\left\{\left\{\mu_{j,t-1}(\theta)\right\}_{j \in \mathcal{N}_i},\pi_{i,t}(\theta)\right\}} {\sum_{k = 1}^{m} \min\left\{\left\{\mu_{j,t-1}(\theta_k)\right\}_{j\in \mathcal{N}_i},\pi_{i,t}(\theta_k)\right\}}, \forall \theta \in \Theta.
\end{equation}
The intuition behind the update rule is that the agents go through a process of elimination and reject the classes with low beliefs. 
In the scenarios with partially informative agents, these agents do not rule out their locally unidentifiable classes. 
With the min-update, the source agent $i \in \mathcal{S}(\theta^*,\theta)$, who can distinguish a pair of classes $\theta^*, \theta$ with its local classifier, will contribute its information to the agent network and drive its neighbors' beliefs on the false class $\theta$ lower through the \textit{min} operator. 
As proven in \cite{Aritra_min}, the min-rule achieves faster asymptotic convergence rates than linear and log-linear updates.

{For simplicity of analysis, we assume a connected communication network in this work. However, as demonstrated by \cite{Aritra_min}, network-wide inference can be achieved as long as the source agent is reachable by other agents in the network. Furthermore, in the case of time-varying communication graphs, the algorithm remains effective when the union of the communication graph is jointly strongly connected.  A similar analysis can be used to show that the update rule in our paper (leveraging posterior distributions directly instead of likelihoods as in \cite{Aritra_min}) will also work in time-varying networks (as long as the unions of the networks over bounded intervals of time are connected).}

{Although we adopt the global \textit{min} update rule from \cite{Aritra_min}, our problem formulation and theoretical performance guarantees differ significantly. 
Firstly, we bridge the gap between distributed classification and non-Bayesian social learning. 
Unlike prior works that require \textbf{precise} knowledge of the underlying generative mechanism and signal structure, we leverage posterior probabilities from classifiers (see Fig. \ref{fig: overview}). Our work facilitates the application of both generative and discriminative classifiers, enhancing model flexibility and applicability. 
Secondly, our approach incorporates a unique input structure, where agents are partially informative, i.e., provide information for a subset of classes. In contrast, traditional social learning requires each agent to have \textbf{complete} likelihood functions for all possible classes, i.e., $\{p_i(\cdot|\theta_q)\}_{q=1}^m$. By adopting the proposed approach, we eliminate the extensive modeling efforts required to characterize sensor and signal structures, consequently reducing overall modeling and training requirements.
Thirdly, we identify, define, and quantify the roles of support agents, a crucial aspect overlooked in prior work. These agents, while not able to directly distinguish the true class, contribute to rejecting false classes. In particular, we show that the performance is influenced by both source and support agents in the network, and offer performance guarantees that demonstrate a strict improvement over \cite{Aritra_min}.}

We include the posterior modification, local and global updates of agent $i \in \mathcal{V}_a$ in Algorithm \ref{Algo}.

% \RestyleAlgo{ruled} 
\begin{algorithm}
\caption{Classification for agent $i \in \mathcal{V}_a$}
\begin{algorithmic}
   \STATE {\bfseries Input:} At each time step $t$, each agent $i$ receives $x_{i,t} \in \mathbb{R}^p$ (unlabelled private observation)
   \STATE  {\bfseries Initialization:} $\mu_{i,0}(\theta) = \frac{1}{|\Theta|}$, $\pi_{i,0}(\theta) = \frac{1}{|\Theta|}$, $\forall \theta \in \Theta$
   \STATE At $t = 0$, transmit $\mu_{i,0}$ to neighbors and receive $\{\mu_{j,0}\}_{j \in \mathcal{N}_i}$
    \FOR{$t \in \mathbb{N}$}
    \FOR{every $\theta \in \Theta_i$}
   \STATE Obtain posterior probabilities $p_{i}(\theta|x_{i,t})$
    \STATE $\hat{\pi}_{i,t}(\theta) = \frac{p_{i}(\theta|x_{i,t})}{p_i(\theta)}\pi_{i,t-1}(\theta)$
    \ENDFOR
    \FOR{every $\theta \in \Theta \setminus \Theta_i$}
    \STATE $\hat{\pi}_{i,t}(\theta) = \max_{\theta_k \in \Theta_i} \hat{\pi}_{i,t}(\theta_k)$
    \ENDFOR
   \FOR{every $\theta \in \Theta$}
    \STATE $\pi_{i,t}(\theta) = \frac{\hat{\pi}_{i,t}(\theta)}{\sum_{k = 1}^{m} \hat{\pi}_{i,t}(\theta_k)}$
    \STATE $\mu_{i,t}(\theta) = \frac{\min\left\{\left\{\mu_{j,t-1}(\theta)\right\}_{j \in \mathcal{N}_i},\pi_{i,t}(\theta)\right\}} {\sum_{k = 1}^{m} \min\left\{\left\{\mu_{j,t-1}(\theta_k)\right\}_{j\in \mathcal{N}_i},\pi_{i,t}(\theta_k)\right\}}$
    \ENDFOR
    \STATE Transmit global belief vector $\mu_{i,t}$ to neighbors
    \STATE Receive neighbors' global belief vectors $\{\mu_{j,t}\}_{j \in \mathcal{N}_i}$
    \ENDFOR
\end{algorithmic}
\label{Algo}
\end{algorithm}

\section{Analysis of Convergence} \label{sec: analysis}
In this section, we introduce assumptions and subsequently demonstrate the convergence of the local and global updates. The proofs of all theoretical results can be found in the Appendix.
% of \cite{myarxiv}.

For the local and global update rules, if the initial belief is zero for any $\theta \in \Theta$, the beliefs of $\theta$ will remain zero in the subsequent updates. Thus, we assume the initial beliefs to be positive to eliminate this situation. In addition, to simplify the analysis, we let the assumption be satisfied by uniformly initializing the beliefs of all classes.
\begin{assumption} \label{aspt: pos init}
Agent $i \in \mathcal{V}_a$ has positive initial beliefs and $\pi_{i,0}(\theta) = \frac{1}{|\Theta|}$ and $\mu_{i,0}(\theta)=\frac{1}{|\Theta|}, \forall \theta \in \Theta$.
\end{assumption}

The next results prove the correctness of the proposed local update. 
\textbf{\begin{theorem} \label{Thm: local}
Consider any {$\theta \in \Theta \setminus \{\theta^*\}$}, and an agent $i \in \mathcal{S}(\theta^*,\theta)$. Suppose Assumptions \ref{aspt: global ident} and \ref{aspt: pos init} are satisfied, then, the update rules \cref{eqn: local update pi hat,eqn: max fill,eqn: local update} ensure that (i) $\pi_{i,t}(\theta) \to 0$ almost surely, (ii) $\lim_{t\to\infty} \pi_{i,t}(\theta^*) \triangleq \pi_{i,\infty} (\theta^*)$ exists almost surely and $\pi_{i,\infty}(\theta^*) \geq \pi_{i,0}(\theta^*) > 0$.
\end{theorem}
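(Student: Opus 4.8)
The plan is to track the logarithm of the belief ratios under the local update, since the update \eqref{eqn: local update pi hat} is multiplicative. First I would consider the unnormalized beliefs $\hat{\pi}_{i,t}$ and note that normalization cancels when we look at ratios $\hat{\pi}_{i,t}(\theta)/\hat{\pi}_{i,t}(\theta^*)$. For $\theta \in \Theta_i$ (so that agent $i$ is a source agent for the pair $\theta^*,\theta$ by hypothesis $i \in \mathcal{S}(\theta^*,\theta)$), I would unroll the recursion to get
\begin{equation*}
\log\frac{\hat{\pi}_{i,t}(\theta)}{\hat{\pi}_{i,t}(\theta^*)} = \log\frac{\pi_{i,0}(\theta)}{\pi_{i,0}(\theta^*)} + \sum_{s=1}^{t} \log\frac{p_i(\theta|x_{i,s})/p_i(\theta)}{p_i(\theta^*|x_{i,s})/p_i(\theta^*)}.
\end{equation*}
Since the data $x_{i,s}$ are i.i.d.\ given $\theta^*$, the summands are i.i.d.\ with mean $-D_i(\theta^*,\theta) < 0$ (using the hypothesis that $i$ is a source agent and Definition~\ref{def: discriminant}; note the sign, because the expectation is taken under $p_i(x|\theta^*)$ and the log-ratio favors $\theta^*$ over $\theta$). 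By the strong law of large numbers, the average of the summands converges almost surely to $-D_i(\theta^*,\theta)$, hence the partial sum diverges to $-\infty$ almost surely, so $\hat{\pi}_{i,t}(\theta)/\hat{\pi}_{i,t}(\theta^*) \to 0$ almost surely. The case $\theta \notin \Theta_i$ cannot occur here since $i \in \mathcal{S}(\theta^*,\theta)$ requires $\theta^*,\theta \in \Theta_i$ by Assumption~\ref{aspt: theta_i}; but I should be careful and handle the general structure, observing that the $\max$-fill in \eqref{eqn: max fill} only affects classes outside $\Theta_i$ and does not interfere with the ratio involving two classes both in $\Theta_i$.

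Next I would convert the ratio statement into statement (i). After normalization, $\pi_{i,t}(\theta) = \hat{\pi}_{i,t}(\theta)/\sum_k \hat{\pi}_{i,t}(\theta_k) \le \hat{\pi}_{i,t}(\theta)/\hat{\pi}_{i,t}(\theta^*)$ (since $\theta^*$ is one of the terms in the denominator sum, and all terms are positive by Assumption~\ref{aspt: pos init} propagated forward), so $\pi_{i,t}(\theta)\to 0$ almost surely, giving (i).

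For statement (ii), I would study $\pi_{i,t}(\theta^*)$ directly. Write $\pi_{i,t}(\theta^*) = \hat{\pi}_{i,t}(\theta^*)/\sum_{k}\hat{\pi}_{i,t}(\theta_k)$; dividing numerator and denominator by $\hat{\pi}_{i,t}(\theta^*)$ gives $\pi_{i,t}(\theta^*) = 1/\big(\sum_{\theta\in\Theta} r_{i,t}(\theta)\big)$ where $r_{i,t}(\theta) \triangleq \hat{\pi}_{i,t}(\theta)/\hat{\pi}_{i,t}(\theta^*)$ and $r_{i,t}(\theta^*)=1$. I would show each $r_{i,t}(\theta)$ converges almost surely: for $\theta\in\Theta_i\setminus\{\theta^*\}$ we just showed $r_{i,t}(\theta)\to 0$; for $\theta\notin\Theta_i$, by \eqref{eqn: max fill}, $r_{i,t}(\theta) = \max_{\theta_k\in\Theta_i}\hat{\pi}_{i,t}(\theta_k)/\hat{\pi}_{i,t}(\theta^*) = \max_{\theta_k\in\Theta_i} r_{i,t}(\theta_k)$, which converges (to $1$ if $\theta^* \in \Theta_i$, since all other $r_{i,t}(\theta_k)\to0$; and one must separately argue convergence in the case $\theta^* \notin \Theta_i$ — but under the hypotheses of this theorem $i\in\mathcal{S}(\theta^*,\theta)$ forces $\theta^*\in\Theta_i$). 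Hence the finite sum $\sum_\theta r_{i,t}(\theta)$ converges almost surely to a finite limit $\ge r_{i,t}(\theta^*)=1$ at every $t$, and the limit is $\ge 1$, so $\pi_{i,\infty}(\theta^*)$ exists and is $\le 1$. For the lower bound $\pi_{i,\infty}(\theta^*)\ge\pi_{i,0}(\theta^*)$, I would argue monotonicity-in-expectation or, more cleanly, show $\sum_{\theta\in\Theta} r_{i,t}(\theta) \le \sum_{\theta\in\Theta} r_{i,0}(\theta)$ does not obviously hold pointwise, so instead I would directly bound: since for $\theta\in\Theta_i\setminus\{\theta^*\}$ the sequence $r_{i,t}(\theta)$ is a product of i.i.d.\ positive random variables whose expectation is $\le 1$ (as $p_i(\theta|x)/p_i(\theta)$ integrated against $p_i(x|\theta^*)$... this needs the classifier-calibration structure), I can show $\mathbb{E}[r_{i,t}(\theta)] \le r_{i,0}(\theta)$; the $\max$ terms need a union-type bound. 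Then $\mathbb{E}[\pi_{i,t}(\theta^*)] \ge \pi_{i,0}(\theta^*)$ by convexity of $x\mapsto 1/x$ — and combined with a.s.\ convergence and dominated convergence (beliefs are in $[0,1]$), $\pi_{i,\infty}(\theta^*)\ge\pi_{i,0}(\theta^*)$.

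The main obstacle I anticipate is the lower-bound half of (ii): establishing that the limiting belief on the true class does not fall below its initial value. The SLLN argument cleanly handles the rejection of false classes in (i) and the \emph{existence} of the limit in (ii), but controlling $\pi_{i,t}(\theta^*)$ from below requires a submartingale/expectation argument showing $\mathbb{E}[p_i(\theta|x)/p_i(\theta)] \le 1$ under $x\sim p_i(\cdot|\theta^*)$ for the off-true classes, together with handling the $\max$-filled coordinates, whose interaction with the normalization is the delicate point; I would likely need Jensen's inequality applied to $1/(\cdot)$ plus a careful treatment of the fact that the $\max$ over $\Theta_i$-coordinates is dominated by the sum of those coordinates.
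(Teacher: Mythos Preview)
Your argument for part (i) and for the \emph{existence} of the limit in part (ii) is correct and is essentially the paper's proof: track the log-ratio $\rho_{i,t}(\theta)=\log(\pi_{i,t}(\theta)/\pi_{i,t}(\theta^*))$, unroll to an i.i.d.\ sum with mean $-D_i(\theta^*,\theta)<0$, and invoke the strong law.

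Where you diverge from the paper is the lower-bound half of (ii), and this is where you are making life much harder than necessary. You have already computed the limit explicitly: for $\theta\in\Theta_i\setminus\{\theta^*\}$ you showed $r_{i,t}(\theta)\to 0$, and for $\theta\notin\Theta_i$ you showed $r_{i,t}(\theta)=\max_{\theta_k\in\Theta_i}r_{i,t}(\theta_k)\to 1$ (using $\theta^*\in\Theta_i$). Hence, almost surely,
\[
\pi_{i,\infty}(\theta^*)=\frac{1}{\sum_{\theta\in\Theta}\lim_t r_{i,t}(\theta)}=\frac{1}{|\Theta\setminus\Theta_i|+1}.
\]
Now simply invoke Assumption~\ref{aspt: pos init}: the initial beliefs are \emph{uniform}, so $\pi_{i,0}(\theta^*)=1/|\Theta|$. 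Since $\Theta_i\neq\emptyset$, $|\Theta\setminus\Theta_i|+1\le|\Theta|$ and the inequality $\pi_{i,\infty}(\theta^*)\ge\pi_{i,0}(\theta^*)$ is immediate. This is exactly what the paper does.

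Your proposed submartingale/Jensen route is not needed, and it also has real problems: the max-filled coordinates satisfy $\mathbb{E}[\max_k r_{i,t}(\theta_k)]\ge\max_k\mathbb{E}[r_{i,t}(\theta_k)]$, which goes the wrong way for your expectation bound on the denominator; and even if you obtained $\mathbb{E}[\pi_{i,t}(\theta^*)]\ge\pi_{i,0}(\theta^*)$, dominated convergence only yields an inequality for $\mathbb{E}[\pi_{i,\infty}(\theta^*)]$, not an almost-sure one. You are rescued only because the limit is a.s.\ constant---but once you observe that, the direct comparison above is simpler and cleaner.
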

}
Theorem \ref{Thm: local} shows that for a source agent $i \in \mathcal{S}(\theta^*,\theta)$, its local belief on the false class $\theta$, $\pi_{i,t}(\theta) \to 0$ almost surely, via the proposed local updates. For a source agent, it can reject the false class asymptotically without the help of any other agents, while keeping the belief on the true class away from zero. 
In the next result, we show that for a support agent $i \in \mathcal{U}^{\theta^*}(\theta)$, its local belief on the false class $\theta \in \Theta_i$ goes to zero as well.

\begin{lemma} \label{lemma: support agent pi}
Consider an agent $i \in \mathcal{U}^{\theta^*}(\theta).$ Suppose Assumptions \ref{aspt: global ident} and \ref{aspt: pos init} are satisfied. Then the update rules \cref{eqn: local update pi hat,eqn: max fill,eqn: local update} ensure that $\pi_{i,t}(\theta) \to 0$ almost surely.
\end{lemma}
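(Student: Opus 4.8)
\textbf{Proof proposal for Lemma \ref{lemma: support agent pi}.}

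The plan is to mimic the structure of the proof of Theorem \ref{Thm: local}, adapting it to the case where $\theta^* \notin \Theta_i$, so that agent $i$ is a support agent rather than a source agent. The crucial observation is that the partial-information fill-in rule \eqref{eqn: max fill} makes $\hat{\pi}_{i,t}(\theta^*) = \max_{\theta_k \in \Theta_i} \hat{\pi}_{i,t}(\theta_k)$, so after normalization $\pi_{i,t}(\theta^*) \geq \pi_{i,t}(\hat\theta)$ for every $\hat\theta \in \Theta_i$. Therefore it suffices to exhibit one class $\hat\theta \in \Theta_i \setminus \{\theta\}$ for which the ratio $\pi_{i,t}(\theta)/\pi_{i,t}(\hat\theta) \to 0$ almost surely; the definition of support agent (Definition \ref{def: support}) guarantees such a $\hat\theta$ with $D_i^{\theta^*}(\hat\theta,\theta) > 0$. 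Note that unlike a source agent, agent $i$ cannot make $\pi_{i,t}(\theta) \to 0$ by driving it down relative to $\theta^*$ directly (it has no belief mass tied to $\theta^*$ through its classifier), which is exactly why we route the argument through an auxiliary class $\hat\theta$ inside $\Theta_i$ and then use \eqref{eqn: max fill} to transfer the conclusion.

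First I would track the log-ratio $L_{i,t} \triangleq \log\frac{\pi_{i,t}(\hat\theta)}{\pi_{i,t}(\theta)}$. Since the normalization constant in \eqref{eqn: local update} is common to all classes, and since for $\hat\theta,\theta \in \Theta_i$ the unnormalized update is given by \eqref{eqn: local update pi hat} rather than the max fill-in, we get the telescoping recursion
\begin{equation}
L_{i,t} = L_{i,0} + \sum_{s=1}^{t}\log\frac{p_i(\hat\theta|x_{i,s})/p_i(\hat\theta)}{p_i(\theta|x_{i,s})/p_i(\theta)}.
\end{equation}
Because the data $\{x_{i,s}\}$ are i.i.d.\ given $\theta^*$ (and $\theta^* \notin \Theta_i$), each summand is an i.i.d.\ random variable whose expectation under $p_i(\cdot|\theta^*)$ is precisely the confusion score $D_i^{\theta^*}(\hat\theta,\theta)$ from \eqref{eqn: confusion score}, which is strictly positive by the support-agent hypothesis. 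By the strong law of large numbers, $\frac{1}{t}L_{i,t} \to D_i^{\theta^*}(\hat\theta,\theta) > 0$ almost surely, hence $L_{i,t} \to \infty$, i.e.\ $\pi_{i,t}(\theta)/\pi_{i,t}(\hat\theta) \to 0$ a.s. Finally, since $\pi_{i,t}(\hat\theta) \le \pi_{i,t}(\theta^*) \le 1$ by \eqref{eqn: max fill} and normalization, we conclude $\pi_{i,t}(\theta) \le \pi_{i,t}(\theta^*)\cdot \frac{\pi_{i,t}(\theta)}{\pi_{i,t}(\hat\theta)} \to 0$ almost surely.

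The main obstacle I anticipate is handling the finiteness and integrability of the summands carefully: the log-ratio $\log\frac{p_i(\hat\theta|x)/p_i(\hat\theta)}{p_i(\theta|x)/p_i(\theta)}$ must be almost-surely finite and integrable for the strong law to apply. This is where the standing positivity assumption ($p_i(\theta|x) > 0$ and $p_i(\theta) > 0$ for all $\theta \in \Theta$, $x \in \mathcal{X}$, stated after Definition \ref{def: discriminant}) together with the finiteness of $\mathcal{X}$ is invoked, ensuring the summands are bounded. A secondary subtlety is that when $|\Theta_i| = 1$ there is no second class $\hat\theta$ available; but then the support-agent set $\mathcal{U}^{\theta^*}(\theta)$ is vacuous for such an agent, so the hypothesis of the lemma already excludes this degenerate case. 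One should also double-check that the max fill-in step \eqref{eqn: max fill}, applied at every time step, does not interfere with the recursion for $L_{i,t}$ — it does not, because both $\hat\theta$ and $\theta$ lie in $\Theta_i$ and are therefore updated by \eqref{eqn: local update pi hat}, with \eqref{eqn: max fill} affecting only coordinates outside $\Theta_i$.
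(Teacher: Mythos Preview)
Your proposal is correct and follows essentially the same approach as the paper: both pick the auxiliary class $\hat\theta\in\Theta_i\setminus\{\theta\}$ guaranteed by Definition~\ref{def: support}, track the log-ratio of local beliefs $\pi_{i,t}(\theta)/\pi_{i,t}(\hat\theta)$, express it as a sum of i.i.d.\ terms with mean $-D_i^{\theta^*}(\hat\theta,\theta)<0$, and apply the strong law. The paper's write-up is terser (it simply states $\sigma_{i,t}(\theta)\to-\infty$ implies $\pi_{i,t}(\theta)\to 0$ without spelling out the bounding step), whereas you route the final bound through $\pi_{i,t}(\theta^*)$ via \eqref{eqn: max fill}; that detour is harmless but unnecessary, since $\pi_{i,t}(\hat\theta)\le 1$ already suffices.
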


The above result states that with probability 1, a support agent will be able to rule out the false class $\theta$. Intuitively, given the input data, agent is able to reject the classes that are the furthest away in the feature space from the true class (the least likely to be a true class), even when the agent cannot distinguish the true class using its local classifier. 

With the rejection of local false beliefs, next, we show that the global belief on the true class $\theta^*$ converges to 1 almost surely. To analyze the global convergence, recall that we assume the network is connected to simplify analysis. In general, we only require the source agents to have paths to other agents in the network. 
% \begin{assumption}\label{aspt: connect}
% For each pair of classes $\theta_p,\theta_q \in \Theta$, $\mathcal{V}_a\setminus\mathcal{S}(\theta_p,\theta_q)$ is reachable from the source agents $\mathcal{S}(\theta_p,\theta_q)$.
% \end{assumption}
\begin{theorem}\label{thm: converge global}
Suppose Assumptions \ref{aspt: global ident} and \ref{aspt: pos init} are satisfied. 
Then, the update rules in Algorithm \ref{Algo} lead to the learning of the true class for all agents, i.e., $\mu_{i,t}(\theta^*) \to 1$ almost surely for all $i\in \mathcal{V}_a$.
\end{theorem}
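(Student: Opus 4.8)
The plan is to deduce the result from two facts: (i) the global belief on the true class never decays to zero, i.e. $\inf_{t}\mu_{i,t}(\theta^*) > 0$ almost surely for every agent $i$; and (ii) for every false class $\theta \neq \theta^*$, the global belief $\mu_{i,t}(\theta)\to 0$ almost surely for every $i$. Once both hold, the normalization $\sum_{\theta \in \Theta}\mu_{i,t}(\theta) = 1$ immediately gives $\mu_{i,t}(\theta^*) = 1 - \sum_{\theta \neq \theta^*}\mu_{i,t}(\theta)\to 1$.

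For (i), I would first show that the \emph{local} belief on $\theta^*$ is uniformly bounded below in time (almost surely). If $\theta^*\in\Theta_i$ and $|\Theta_i|\ge 2$, then Assumption \ref{aspt: theta_i} makes $i$ a source agent for every pair in $\Theta_i$, so Theorem \ref{Thm: local}(ii) gives $\lim_t \pi_{i,t}(\theta^*) > 0$; since at each finite $t$ the quantity $\pi_{i,t}(\theta^*)$ is a ratio of strictly positive terms (by the posterior/prior positivity assumptions) with $\pi_{i,0}(\theta^*) > 0$, a sequence of positive numbers converging to a positive limit has a positive infimum. If $\theta^*\notin\Theta_i$, the max-fill rule \eqref{eqn: max fill} makes $\hat\pi_{i,t}(\theta^*)$ the largest of the unnormalized local beliefs, so after normalization $\pi_{i,t}(\theta^*) \ge 1/m$ (it is $\ge$ each of the $|\Theta_i|$ in-set normalized entries and equals each of the $m - |\Theta_i|$ out-of-set entries, and all $m$ entries sum to $1$); the degenerate case $\Theta_i=\{\theta^*\}$ gives $\pi_{i,t}(\theta)=1/m$ for all $\theta$. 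Hence there is a (random) $c>0$ with $\pi_{i,t}(\theta^*)\ge c$ for all $i,t$. Writing $\phi_{i,t}(\theta) = \min\{\{\mu_{j,t-1}(\theta)\}_{j\in\mathcal{N}_i},\pi_{i,t}(\theta)\}$ for the unnormalized global belief, one gets $\phi_{i,t}(\theta^*) \ge \min\{\min_j \mu_{j,t-1}(\theta^*), c\}$ while the normalizer $\sum_k \phi_{i,t}(\theta_k) \le \sum_k \pi_{i,t}(\theta_k) = 1$, so $\min_i \mu_{i,t}(\theta^*) \ge \min\{\min_i \mu_{i,t-1}(\theta^*), c\}$; with $\mu_{i,0}(\theta^*)=1/|\Theta|$ from Assumption \ref{aspt: pos init}, induction on $t$ yields $\mu_{i,t}(\theta^*)\ge \min\{1/m,c\} =: c^* > 0$ for all $i,t$ almost surely.

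For (ii), fix $\theta\neq\theta^*$. By Assumption \ref{aspt: global ident} there is a source agent $s\in\mathcal{S}(\theta^*,\theta)$, and Theorem \ref{Thm: local}(i) gives $\pi_{s,t}(\theta)\to 0$ almost surely. The bound from (i) controls every normalizer, since $\sum_k \phi_{i,t}(\theta_k) \ge \phi_{i,t}(\theta^*)\ge c^*$, hence $\mu_{i,t}(\theta)\le \phi_{i,t}(\theta)/c^*$ for all $i$. I would then propagate the vanishing of $\mu(\theta)$ outward from $s$ by induction on graph distance: because $\phi_{s,t}(\theta)\le \pi_{s,t}(\theta)$ we get $\mu_{s,t}(\theta)\le \pi_{s,t}(\theta)/c^*\to 0$; and if $j$ is a neighbor of an agent $i$ already known to satisfy $\mu_{i,t}(\theta)\to 0$, then $\phi_{j,t}(\theta)\le \mu_{i,t-1}(\theta)$ gives $\mu_{j,t}(\theta)\le \mu_{i,t-1}(\theta)/c^*\to 0$ (the one-step delay is harmless for the limit). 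Connectedness and finiteness of $\mathcal{G}_a$ ensure every agent is reached after finitely many such steps, so $\mu_{i,t}(\theta)\to 0$ for all $i$. Intersecting the (finitely many) almost-sure events over all false classes and combining with $\sum_\theta \mu_{i,t}(\theta)=1$ yields $\mu_{i,t}(\theta^*)\to 1$ almost surely; note that this argument needs only the source agents from Assumption \ref{aspt: global ident}, not the support agents of Lemma \ref{lemma: support agent pi} (those sharpen the rate).

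The main obstacle I anticipate is establishing the uniform-in-time lower bound in (i): Theorem \ref{Thm: local} supplies only a positive \emph{limit} of $\pi_{i,t}(\theta^*)$ on source agents, so I must separately rule out a vanishing infimum (via finite-time strict positivity), separately handle agents that cannot identify $\theta^*$ (via the max-fill rule), and then verify that the min-based normalizer in \eqref{eqn: global update} never shrinks the true-class mass below a fixed constant. Everything downstream — the hop-by-hop propagation of the false-class global beliefs to zero — is then a routine consequence of the min-rule once the constant $c^*$ is in hand.
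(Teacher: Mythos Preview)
Your proposal is correct and follows essentially the same two-step approach as the paper: lower-bound $\mu_{i,t}(\theta^*)$ via the observation that the min-rule normalizer is at most $\sum_k \pi_{i,t}(\theta_k)=1$, then propagate the vanishing of each false class from a source agent outward through the network using that the normalizer is at least that lower bound. Your execution is slightly cleaner (a uniform-in-$t$ bound rather than the paper's eventual one, and direct hop-by-hop limits instead of the paper's $\bar\epsilon^{\,q}$ tracking through the diameter), but the decomposition, the key inequalities, and the case split on whether $\theta^*\in\Theta_i$ are the same.
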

The previous result shows that the true class $\theta^*$ will be identified by all agents in the network with probability 1. In the next theorem, we characterize the rate of rejection of any false class $\theta \in \Theta \setminus \{\theta^*\}$.
\begin{theorem} \label{thm: rejection rate}
Suppose Assumptions \ref{aspt: global ident} and \ref{aspt: pos init} are satisfied. 
Then, for all $i \in \mathcal{V}_a$, for any false class $\theta \in \Theta \setminus \{\theta^*\}$, the update rules in Algorithm \ref{Algo} guarantee the rejection of $\theta$ with the rate:
\begin{equation}
    \liminf_{t\to\infty} - \frac{\log \mu_{i,t}(\theta)}{t} \geq R_{v_\theta}\, a.s.,
\end{equation}
where $$R_{v_\theta} \triangleq \max_{i \in \mathcal{S}(\theta^*,\theta) \cup \mathcal{U}^{\theta^*}(\theta)} \{D_i(\theta^*,\theta), \max_{\hat{\theta} \in \Theta_i}D_i^{\theta^*}(\hat{\theta}, \theta)\},$$ is the best rejection rate of false class $\theta$ and $v_\theta \in \argmax_{i \in \mathcal{S}(\theta^*,\theta) \cup \mathcal{U}^{\theta^*}(\theta)} R_{v_\theta}$.

\end{theorem}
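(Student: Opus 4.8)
The plan is to establish the rate bound by analyzing the local beliefs of the ``best'' rejecting agent $v_\theta$ first, then propagating that rate through the network via the min-rule. I would begin by recalling from the proofs of Theorem \ref{Thm: local} and Lemma \ref{lemma: support agent pi} (which hold by assumption here) that for a source agent $i \in \mathcal{S}(\theta^*,\theta)$ or support agent $i \in \mathcal{U}^{\theta^*}(\theta)$, the quantity $\log\bigl(\hat{\pi}_{i,t}(\theta^*)/\hat{\pi}_{i,t}(\theta)\bigr)$ (or, after the max-fill, $\log\bigl(\max_{\theta_k}\hat{\pi}_{i,t}(\theta_k)/\hat{\pi}_{i,t}(\theta)\bigr)$) is a sum of i.i.d.\ increments whose expectation equals $D_i(\theta^*,\theta)$ or $D_i^{\theta^*}(\hat\theta,\theta)$ respectively; for the support case one should take the particular $\hat\theta$ that witnesses membership in $\mathcal{U}^{\theta^*}(\theta)$, and more generally the best such $\hat\theta$ gives the $\max_{\hat\theta\in\Theta_i} D_i^{\theta^*}(\hat\theta,\theta)$ term. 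Applying the strong law of large numbers gives $-\tfrac1t\log\pi_{i,t}(\theta) \to (\text{that KL-type rate})$ almost surely for agent $i = v_\theta$, after observing that the normalization denominator in \eqref{eqn: local update} contributes a non-negative amount to $-\log\pi_{i,t}(\theta)$ and that $\pi_{i,t}(\theta^*)$ (or the max entry) stays bounded away from $0$ by Theorem \ref{Thm: local}(ii), so the ratio and $\pi_{i,t}(\theta)$ itself have the same exponential decay rate.

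Next I would unroll the global min-update \eqref{eqn: global update}. Because the min is taken over inclusive neighbors and then over time, one has for any agent $j$ and any $t$ that $\mu_{j,t}(\theta) \le \big(\text{normalization factor}\big)^{-1}\cdot \min_{0\le s\le t,\ k:\ \mathrm{dist}(k,j)\le t-s}\pi_{k,s}(\theta)$; concretely, since $\mathcal{G}_a$ is connected with some diameter $\Delta$, for $t$ large the belief $\mu_{j,t}(\theta)$ is upper-bounded (up to the product of normalization constants, each of which is at most $1$ because $\sum_\theta \mu_{j,t}(\theta)=1$ forces the denominator $\le 1$, hence its reciprocal $\ge 1$ — wait, that direction is wrong, so instead I use the standard fact from \cite{Aritra_min} that the denominator is at least $\mu_{j,t}(\theta^*)$ which converges to $1$ by Theorem \ref{thm: converge global}, so asymptotically it does not hurt the rate) by $\pi_{v_\theta, t-\Delta}(\theta)$. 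Taking $-\tfrac1t\log$ of both sides, using the local rate computed in the first step, and noting $\tfrac{t-\Delta}{t}\to 1$, yields $\liminf_{t\to\infty} -\tfrac1t\log\mu_{j,t}(\theta) \ge R_{v_\theta}$. Finally, maximizing the choice of rejecting agent over $\mathcal{S}(\theta^*,\theta)\cup\mathcal{U}^{\theta^*}(\theta)$ gives the stated $R_{v_\theta}$ with $v_\theta$ the arg-max.

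The main obstacle I expect is handling the normalization denominators rigorously: both \eqref{eqn: local update} and \eqref{eqn: global update} divide by a sum over all classes, and to get a clean $\liminf$ lower bound on the rejection rate one must show these denominators do not decay exponentially (ideally they converge to something positive, e.g.\ the denominator in \eqref{eqn: global update} is bounded below by the true-class term which tends to $1$ by Theorem \ref{thm: converge global}). A secondary subtlety is the max-fill step \eqref{eqn: max fill}: for a support agent, $\hat\pi_{i,t}(\theta)$ with $\theta\in\Theta_i$ is compared against $\max_{\theta_k\in\Theta_i}\hat\pi_{i,t}(\theta_k)$, and one must verify that this max (over the locally identifiable classes) grows at least as fast as $e^{t\,D_i^{\theta^*}(\hat\theta,\theta)}\hat\pi_{i,t}(\theta)$ for the witnessing $\hat\theta$ — which follows since the max dominates the $\hat\theta$-entry, whose ratio to the $\theta$-entry has i.i.d.\ log-increments with mean $D_i^{\theta^*}(\hat\theta,\theta)$, and then the concavity/SLLN argument goes through. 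Once these two bookkeeping points are dispatched, the remaining steps are routine applications of the strong law and the telescoping structure of the min-rule.
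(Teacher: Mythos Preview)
Your proposal is correct and follows essentially the same route as the paper: establish the exponential decay of $\pi_{v_\theta,t}(\theta)$ via the SLLN on the log-ratio increments (separately for source and support agents, handling the max-fill exactly as you describe), then propagate this bound through the network by unrolling the min-rule over the diameter, controlling the accumulated normalization factors by the uniform lower bound on the $\theta^*$-entries. The only cosmetic difference is that the paper packages the normalization control via the sample-path constant $\eta(\omega)$ from Lemma~\ref{lemma: lower bound} (the denominator is at least $\min\{\{\mu_{l,t-1}(\theta^*)\}_{l\in\mathcal{N}_j},\pi_{j,t}(\theta^*)\}\ge\eta(\omega)$, not literally $\mu_{j,t}(\theta^*)$) rather than invoking Theorem~\ref{thm: converge global}, but either route gives the same $O(1/t)$ contribution after taking $-\tfrac{1}{t}\log(\cdot)$.
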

With probability 1, each agent will reject any false class $\theta$ exponentially fast, with a rate that is eventually lower-bounded by the best agent $v_\theta$ with the highest performance score $R_{v_\theta}$, either the best source agent $i\in\mathcal{S}(\theta^*,\theta)$ and its discriminative score $D_i(\theta^*,\theta)$ or the best support agent $i \in \mathcal{U}^{\theta^*}(\theta)$ and its confusion score $\max_{\hat{\theta}\in\Theta_i} D_i^{\theta^*}(\hat{\theta},\theta)$ in the network. 
This lower bound is a strict improvement over \cite{Aritra_min}, as both source agents and support agents contribute to the prediction convergence. Additionally, as stated in \cite{Aritra_min}, the convergence rate is independent of network size and structure. In other words, the asymptotic learning rate is independent of how the information is distributed among the agents. 

\section{Experiment and Simulation} \label{sec: experiments}
\subsection{Dataset and Agent Network}
In this simulation, we use the widely recognized CIFAR-10 image dataset \cite{cifar10}, containing 10 distinct classes with a total of 50,000 training images and 10,000 testing images. 

We use an Erdos-Renyi graph with an edge generation probability of 0.5 for the communication topology of 9 agents as shown in Fig. \ref{Fig: agent}. Each agent has access to all the training data of 4 classes (5000 images per class), as labeled, thus has capabilities of identifying the 4 classes, i.e., $|\Theta_i| = 4$. The subsets of identifiable classes $\Theta_i$ of each agent are selected such that the global identifiability condition is satisfied.

We train random forests as the classifiers for each agent. A random forest is an ensemble of decision trees \cite{breiman2001random}, and we obtain the posterior probability by averaging the probabilistic prediction of all trees for each agent. In this demonstration, random forests are not the best classifier for the complex image classification task; however, we use them as weak classifiers to demonstrate the capabilities of our algorithm. For a centralized baseline, we train a random forest with 200 trees with all the training data (50,000 images) from CIFAR-10. For the distributed scenario, we independently train a random forest with 50 trees for each agent, using only the training data belonging to classes in $\Theta_i$.

Finally, to demonstrate the roles of source and support agents and the improvement in learning rate, we select a subset of agents and provide neural-network-based classifiers with higher accuracy (than random forest). We select MobileNet V3 Large \cite{mobilenetv3} for its relative light weight, good performance, and its model architecture design for mobile classification tasks. For fair comparisons (with the random forest set up), we tune MobileNets (weights pre-tained with ImageNet \cite{mobilenetv3}\cite{imagenet}) with all available CIFAR-10 training data, depending on the agent and its classes $\Theta_i$. 

\subsection{Local Update}
\begin{figure*}
\centering
\begin{minipage}{0.32\textwidth}
\centering
\includegraphics[width = 0.8\linewidth]{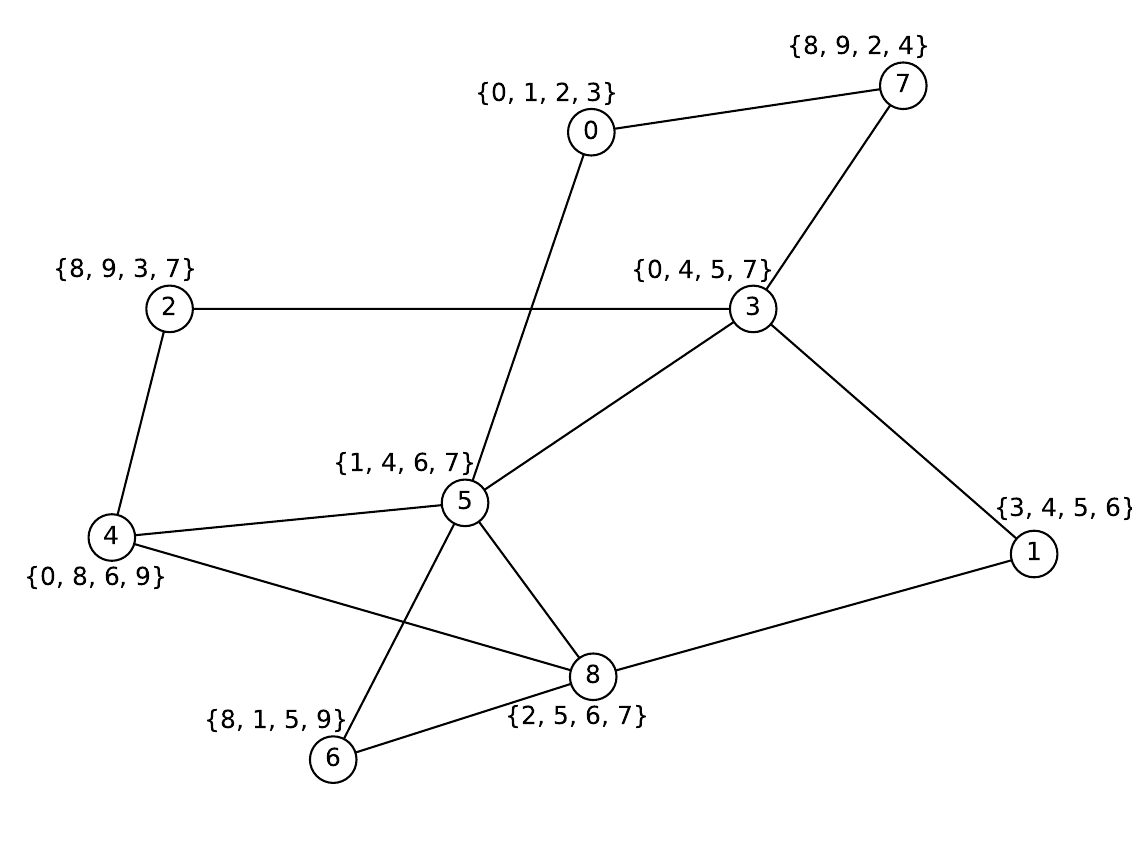}
\caption{Agent communication topology and their distinguishable classes $\Theta_i$.}
\label{Fig: agent}
\end{minipage}
\begin{minipage}{0.33\textwidth}
\centering
\includegraphics[width = \linewidth]{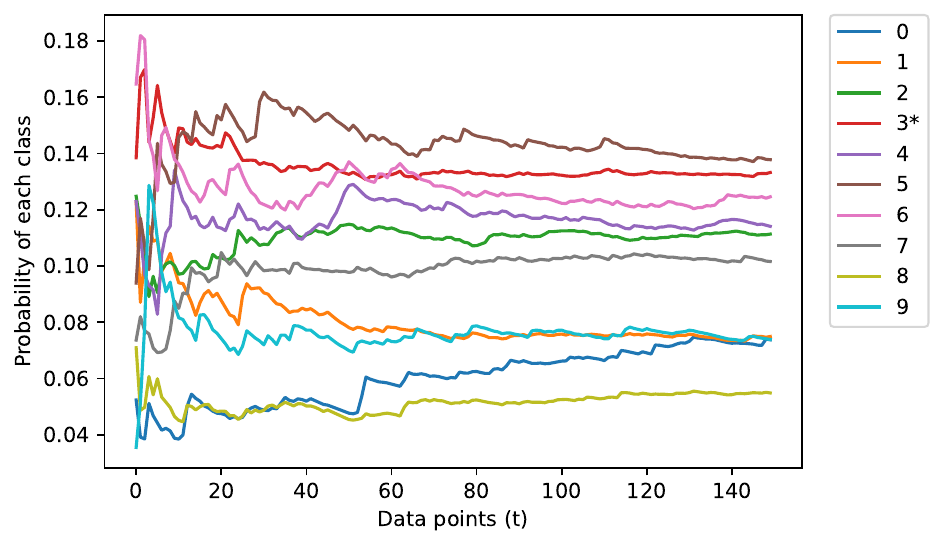}
\caption{Local averaging}
\label{Fig: avg}
\end{minipage}
\begin{minipage}{0.33\textwidth}
\centering
\includegraphics[width = \linewidth]{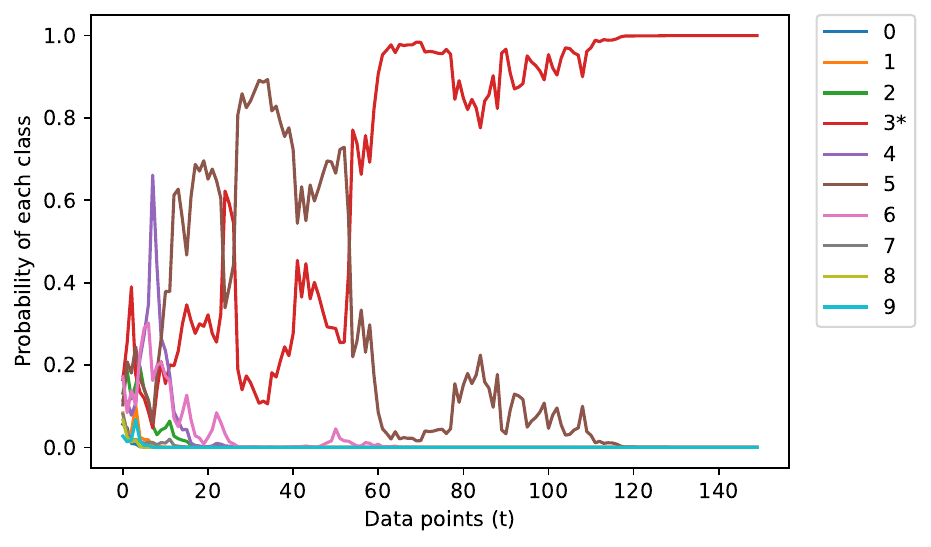}
\caption{Proposed local update}
\label{Fig: proposed}
\end{minipage}
\end{figure*}

In this simulation, we demonstrate the performance of the local update rule. We consider a centralized baseline random forest trained with data from all classes, i.e., $|\Theta_i| = 10$, as described in the previous subsection. The class with the highest probability is the estimated true class. The trained model achieved a training accuracy of 0.96 and a testing accuracy of 0.47. The confusion matrix of the model evaluated with testing data is shown in Fig. \ref{Fig: CM} (in the Appendix).
% of \cite{myarxiv}).

We select class ``cat'' (label 3) as the true class $\theta^*$ since it is the class with the lowest accuracy according to the confusion matrix. We independently and identically sample 150 cat images from the false negative (incorrectly classified as other classes) testing data and provide them sequentially to the classifier. The number of misclassifications of these images is shown in Fig. \ref{Fig: hist} (in the Appendix).
% of \cite{myarxiv}). 
The classifier cannot identify the true class $\theta^*$ for any data point.

We then consider simple averaging, where we average across all the posterior probabilities from the classifiers over all time steps. The results are shown in Fig. \ref{Fig: avg}. However, the classifier cannot identify the correct class by averaging the probabilities. In Fig. \ref{Fig: proposed}, the posterior probabilities were updated according to \eqref{eqn: local update pi hat} and \eqref{eqn: local update}, assuming uniform prior distributions. The classifier can identify the correct class at 60 time steps and the belief on the correct class $\pi(\theta^*)$ converges to 1. 

% As shown in Fig. \ref{fig: lambda}, the Assumptions \ref{aspt: global ident}, \ref{aspt: pos init}, and \ref{aspt: bounded} are satisfied, and according to Theorem \ref{Thm: local}, $\pi_{i,t}(\theta) \to 0, \forall \theta \in \Theta \setminus \{\theta^*\}.$ The classifier was able to utilize the probabilities of the different false negatives, to eventually reject the false classes. 

% \begin{figure}
%     \centering
%     \includegraphics[width = 0.7\linewidth]{}
%     \caption{Average of $\lambda_{i,t}(\theta)$}
%     \label{fig: lambda}
% \end{figure}
\subsection{Distributed Setting}
We now consider the distributed setting with 9 agents (Fig. \ref{Fig: agent}), each independently trained with their 4 classes of training data and equipped with a random forest of 50 trees. 
At each time step, each agent is given an image that is identically and independently sampled from the testing data of class $\theta^*$ (cat).  

In Fig. \ref{Fig: belief of a1}, Fig. \ref{Fig: belief of a3}, and Fig. \ref{Fig: belief of a5}, we include the local beliefs of agents for their respective observable classes $\Theta_i$. Each agent rejects all classes except one class that is the most likely to be the true class. However, because of the partial information of each agent, they cannot identify the true class just yet: they rely on the communication with neighbors to do so.

In Fig. \ref{Fig: belief on true 3}, Fig. \ref{Fig: belief on false 5}, and Fig. \ref{Fig: belief on false 0}, we include the beliefs of agents on the true (class 3) and false classes (selected to be class 5 and class 0), respectively. We include the beliefs of agent 1, who is a source agent in $\mathcal{S}(3,5)$, and support agent 3, 5, 7. We observe that the beliefs on the true class converge to 1 and the beliefs on the false class converge to 0 for all selected agents. Agent 1 was able to correctly identify the true class $\theta^*$ using its private and neighbors' information. Agents 3, 5, and 7, who have no capability of distinguishing between class 3 and class 5, were also able to identify the correct class by integrating neighbors' information. Note that the time steps scale of Fig. \ref{Fig: belief on false 0} is different than previous two figures, due to the fast rejection of class 0, thanks to support agent 3, who can reject class 0 efficiently. 
Following the proposed update rules, all agents were able to identify the correct class at about 10 time steps when $\mu_i(\theta^*) > \mu_i(\theta), \forall \theta \in \Theta \setminus \{\theta^*\}$. 
% Additionally, the min-update has the benefit of a faster asymptotic convergence rate compared to average consensus. 
% However, average consensus has better transient behavior as shown in Fig. \ref{Fig: belief on false}, for $t \in [0,10].$

\begin{figure*}[]
\centering
\begin{minipage}{0.28\textwidth}
    \centering
    \includegraphics[width = \linewidth]{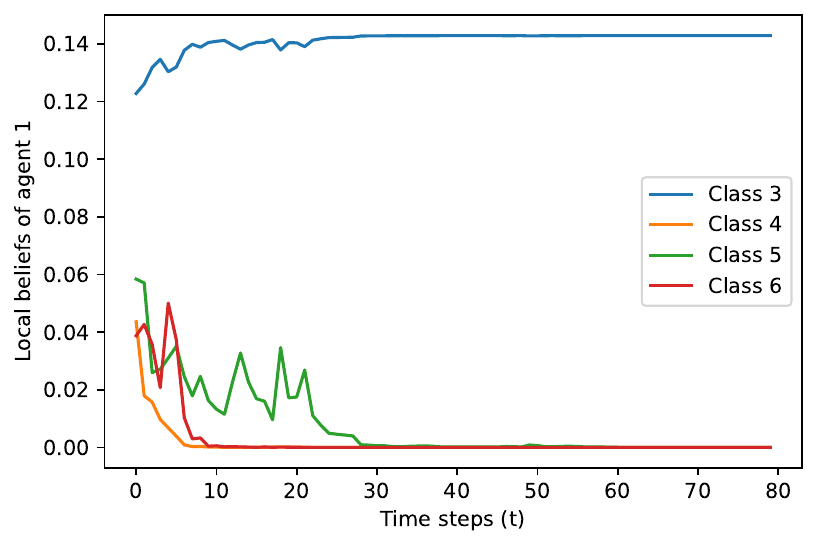}
    \caption{Local beliefs of agent 1.}
    \label{Fig: belief of a1}
\end{minipage}
\begin{minipage}{0.28\textwidth}
    \centering
    \includegraphics[width = \linewidth]{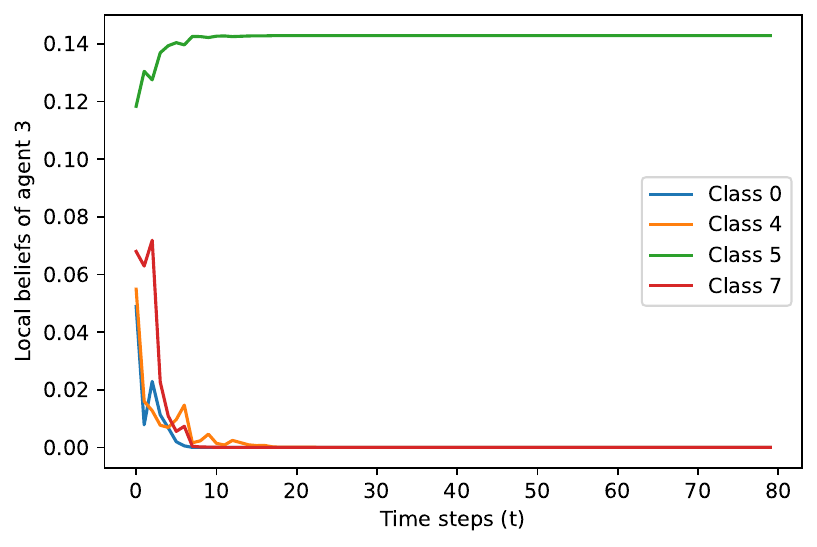}
    \caption{Local beliefs of agent 3.}
    \label{Fig: belief of a3}
\end{minipage}
\begin{minipage}{0.28\textwidth}
    \centering
    \includegraphics[width = \linewidth]{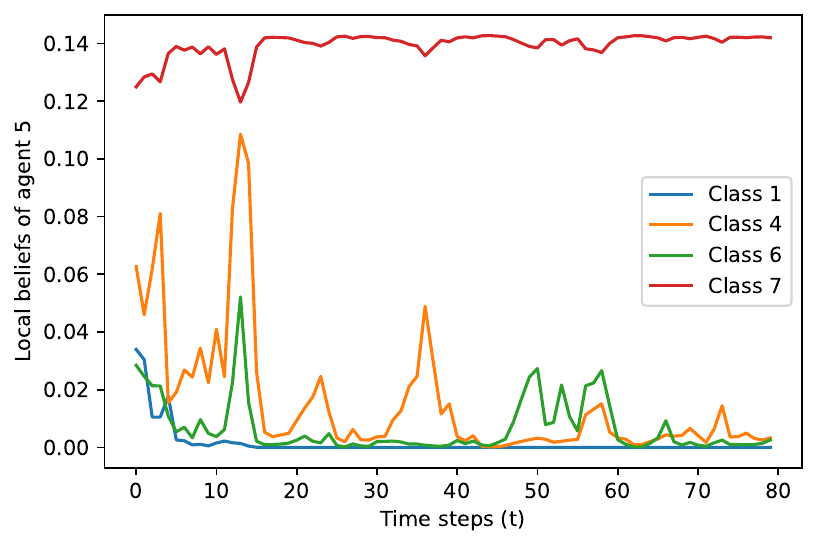}
    \caption{Local beliefs of agent 5.}
    \label{Fig: belief of a5}
\end{minipage}
\end{figure*}

\begin{figure*}[]
\centering
\begin{minipage}{0.28\textwidth}
    \centering
    \includegraphics[width = \linewidth]{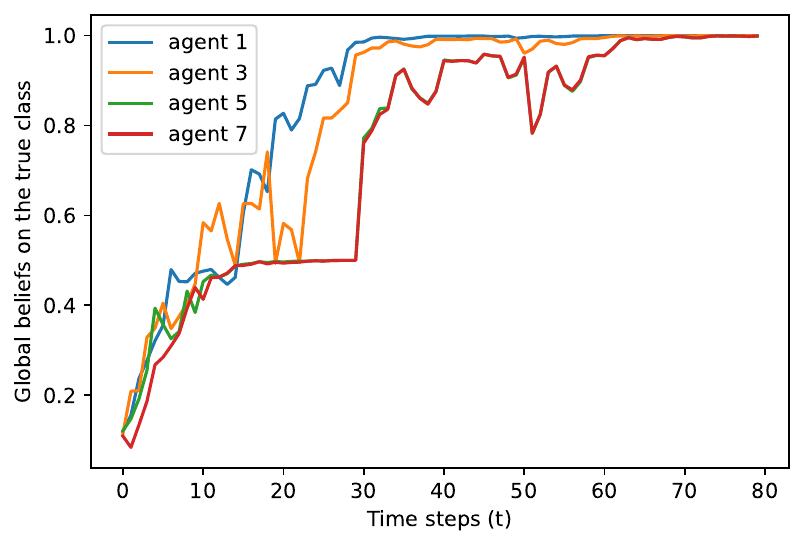}
    \caption{Global beliefs on true class 3.}
    \label{Fig: belief on true 3}
\end{minipage}
\begin{minipage}{0.28\textwidth}
    \centering
    \includegraphics[width = \linewidth]{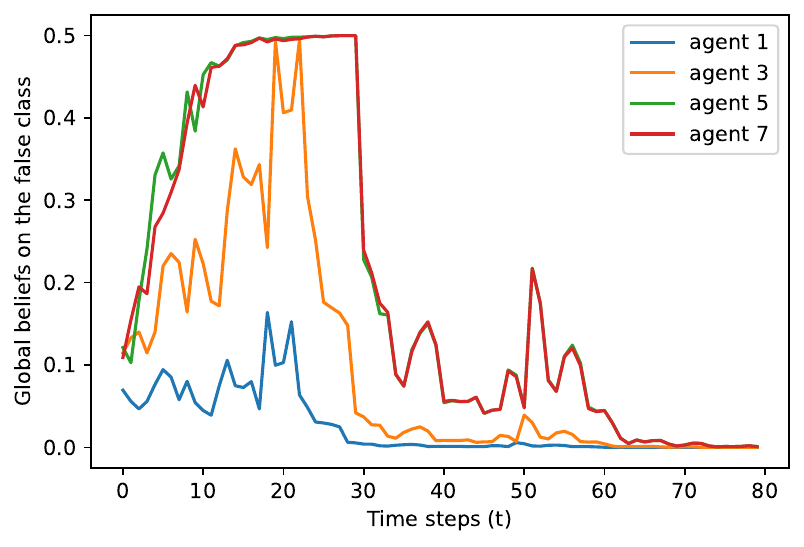}
    \caption{Global beliefs on false class 5.}
    \label{Fig: belief on false 5}
\end{minipage}
\begin{minipage}{0.28\textwidth}
    \centering
    \includegraphics[width = \linewidth]{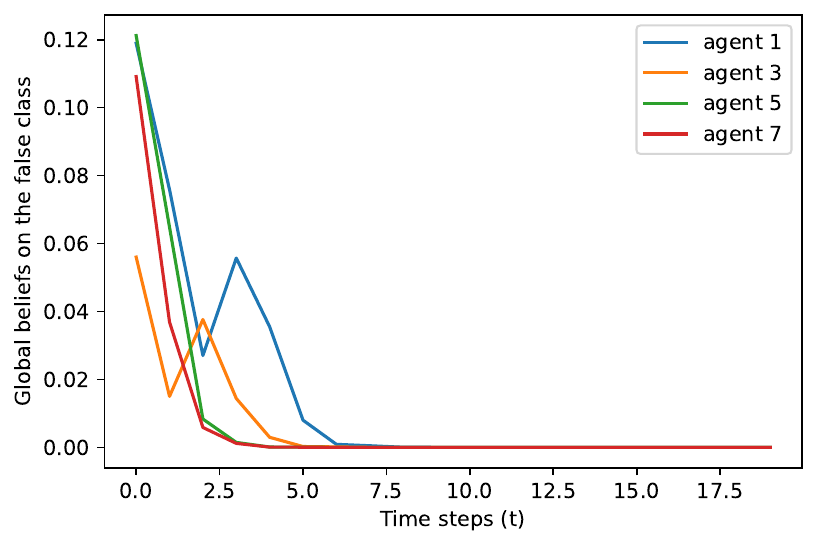}
    \caption{Global beliefs on false class 0.}
    \label{Fig: belief on false 0}
\end{minipage}
\end{figure*}

\subsection{Improvements in Learning Rate}
We further examined the performance of the proposed algorithm, when selected agents are given powerful classifiers. More powerful classifiers will result in higher discriminative scores or higher confusion scores, depending on the specific agents. In this experiment, with the application of distributed sensor networks in mind, we selected agent 1 and agent 7, and trained MobileNet V3 Large \cite{mobilenetv3} using all the training data in their respective classes $\Theta_i$. 
In Fig. \ref{Fig: 7 DNN}, we only give agent 7 a trained neural network. However, the performance improvement is limited. This is due to the fact that agent 7 is not a source agent, nor can it distinguish the most confused classes, 3 and 5. 
In Fig. \ref{Fig: 1 DNN}, we only give agent 1 a trained neural network. As shown in the figure, the performance of all selected agents are significantly improved, since agent 1 is a source agent $i \in \mathcal{S}(3,5).$ 
Finally, in Fig. \ref{Fig: 17 DNN}, we give agent 1 and agent 7 their respective trained neural networks. This results in an overall better transient behavior and asymptotic convergence. 
\begin{figure*}[]
\centering
\begin{minipage}{0.28\textwidth}
    \centering
    \includegraphics[width = \linewidth]{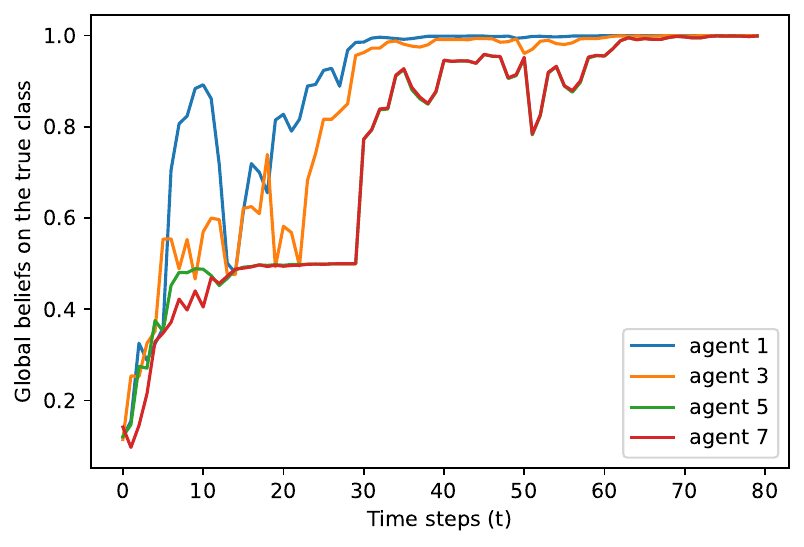}
    \caption{Global beliefs on true class with improved agent 7.}
    \label{Fig: 7 DNN}
\end{minipage}
\begin{minipage}{0.28\textwidth}
    \centering
    \includegraphics[width = \linewidth]{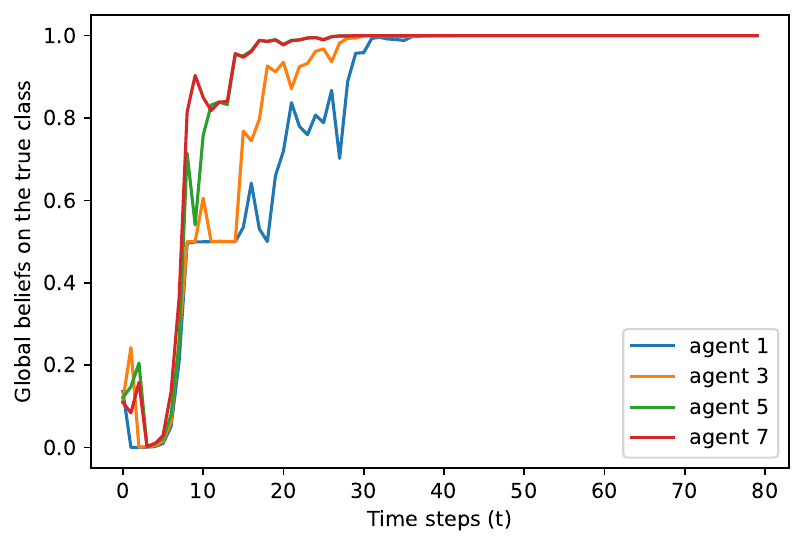}
    \caption{Global beliefs on true class with improved agent 1.}
    \label{Fig: 1 DNN}
\end{minipage}
\begin{minipage}{0.28\textwidth}
    \centering
    \includegraphics[width = \linewidth]{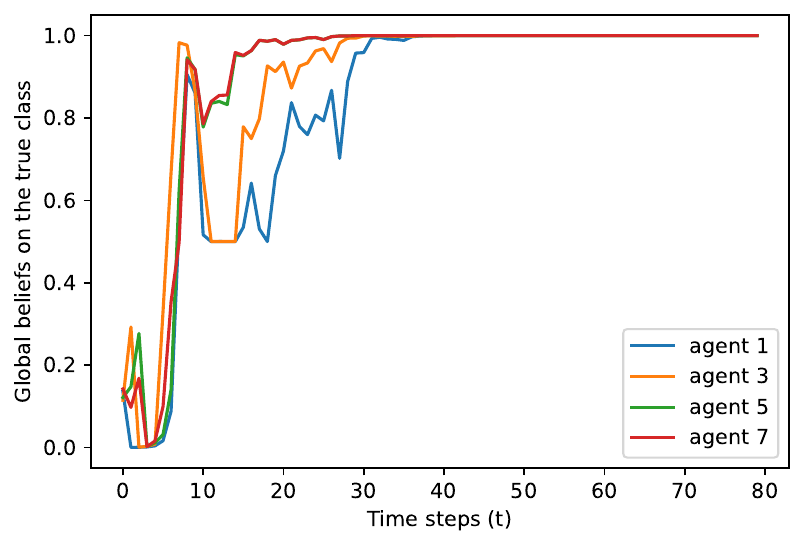}
    \caption{Global beliefs on true class with improved agent 1 and agent 7.}
    \label{Fig: 17 DNN}
\end{minipage}
\end{figure*}

\section{Conclusions and Future Work} \label{sec: conclusion}
We proposed a distributed algorithm to solve the classification problem with a network of partially informative and heterogeneous agents. We based our algorithm on a simple recursive local update and a min-rule-based global update. We provided theoretical guarantees of the convergence and demonstrated the performance of our algorithm through simulation with image data and experimented with random forest classifiers and MobileNet. 

For future work, we are looking into improving the performance of the proposed algorithm through training different agents with data of different feature spaces, such as images, sounds, temperatures, etc. We will also explore the use of calibration techniques to improve the transient behaviors of the proposed algorithm. 

\bibliography{conference_101719}
\bibliographystyle{unsrt}

\clearpage
\appendix

\section{Images}
\begin{figure}[ht!]
    \centering
    \includegraphics[width =0.6\linewidth]{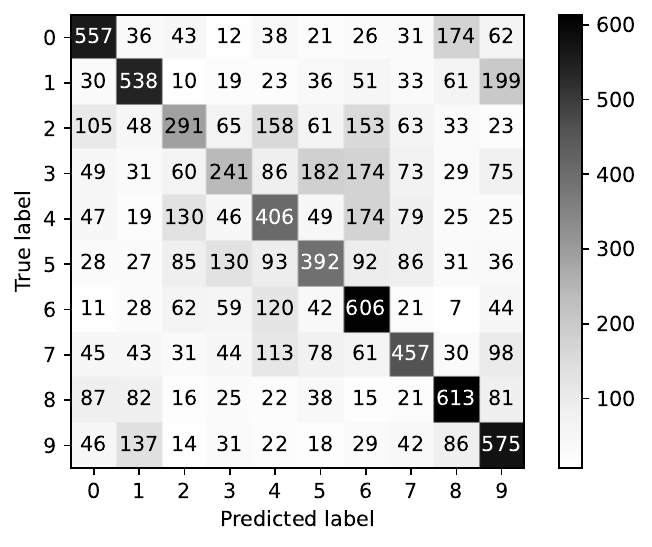}
    \caption{Confusion matrix}
    \label{Fig: CM}
\end{figure}
\begin{figure}[ht!]
    \centering
    \includegraphics[width = 0.6\linewidth]{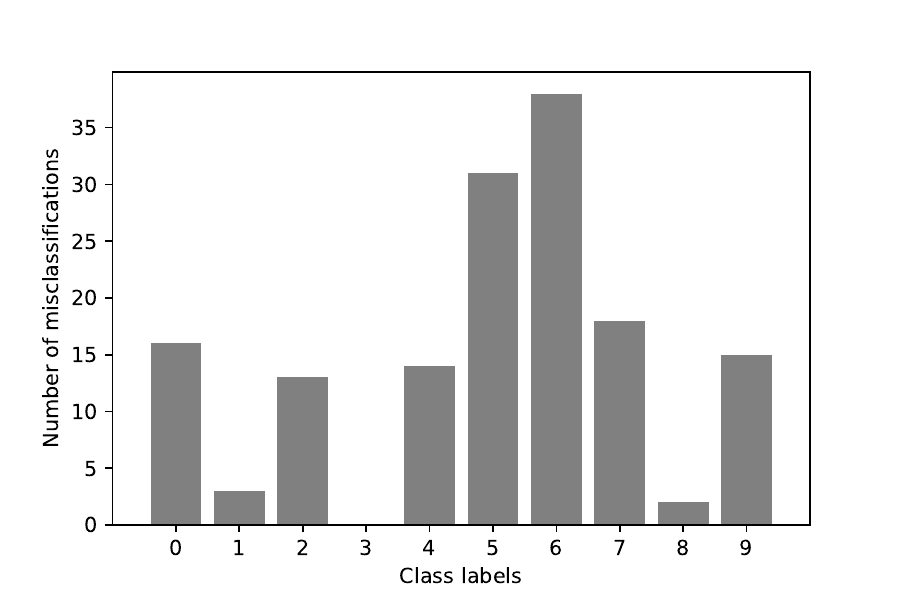}
    \caption{Histogram of misclassification data}
    \label{Fig: hist}
\end{figure}

\section*{Proof of Theorem \ref{Thm: local}}
Compared to \cite{Aritra_min}, our proposed local update rule is not Bayesian, but we use a similar proof technique.

Pick any $\theta \in \Theta$ and consider an agent $i\in\mathcal{S}(\theta^*,\theta)$. 
% Recall that from Assumption \ref{aspt: theta_i}, agent $i$ can distinguish between all pairs of classes $\Theta_i \subset \Theta$. 
Define:
\begin{align*}
\rho_{i,t}(\theta) &= \log \frac{\pi_{i,t}(\theta)}{\pi_{i,t}(\theta^*)}, \\%\forall \theta \in \Theta_i \setminus \{\theta^*\}\\
\lambda_{i,t}(\theta) &= \log \frac{p_{i}(\theta|x_{i,t})/p_i(\theta)}{p_{i}(\theta^*|x_{i,t})/p_i(\theta^*)}.%, \forall \theta \in \Theta_i \setminus \{\theta^*\}.
\end{align*}

Based on \eqref{eqn: local update pi hat} and \eqref{eqn: local update}, we can write
\[\rho_{i,t+1}(\theta) = \rho_{i,t}(\theta) + \lambda_{i,t+1}(\theta), \forall t \in \mathbb{N},\]
and expanding the expression over time, we obtain
\begin{equation*}
    \rho_{i,t}(\theta) = \rho_{i,0}(\theta) + \sum_{k=1}^t \lambda_{i,k}(\theta), \forall t \in \mathbb{N}.
\end{equation*}
Note that since $\mathcal{X}$ is a finite space, $p_i(\theta|x_{i,t}) > 0$, and $p_i(\theta) > 0, \forall \theta \in \Theta$ and $\forall i \in \mathcal{V}_a$, $\{\lambda_{i,t}(\theta)\}$ is a sequence of i.i.d. random variables with finite expectations. The expected value of $\lambda_{i,t}(\theta)$ is given by $-D_i(\theta^*,\theta)$ from \eqref{eqn: KL like}. Based on the law of large numbers, we have $\frac{1}{t}\sum_{k=1}^{t} \lambda_{i,k}(\theta) \to -D_i(\theta^*,\theta)$ almost surely. 
Thus, \begin{equation} \label{eqn: rho D}
    \lim_{t \to \infty} \frac{1}{t} \rho_{i,t}(\theta) = -D_i(\theta^*,\theta)\; a.s.
\end{equation}
If the source agent has sufficiently good discriminative power, i.e., $D_i(\theta^*,\theta)>0$, it follows that $\rho_{i,t}(\theta) \to -\infty$ almost surely, and $\pi_{i,t}(\theta) \to 0$ almost surely.  

For the same agent $i$, for all $\theta' \in \Theta \setminus \Theta_i$, the agent $i$ cannot distinguish $\theta'$ from $\theta^*$. From \eqref{eqn: max fill} and the previous discussion, 
\begin{equation}
\lim_{t\to\infty} \pi_{i,t}(\theta') = \lim_{t\to\infty} \pi_{i,t}(\theta^*) = \frac{1}{|\Theta \setminus \Theta_i| + 1}\; a.s.
\end{equation}
As a result, a local belief vector $\pi_{i,\infty}$ exists almost surely, with zero entries corresponding to $\theta \in \Theta_i \setminus \{\theta^*\}$ and non-zero entries corresponding to $\forall \theta \in  \Theta \setminus \Theta_i \cup \{\theta^*\}.$ Based on the previous discussion and Assumption \ref{aspt: pos init}, we obtain $\pi_{i,\infty}(\theta^*) \geq \pi_{i,0}(\theta^*) > 0$.

\section*{Proof of Lemma \ref{lemma: support agent pi}}
Pick any $\theta \in \Theta$ and select an agent $i \in \mathcal{U}^{\theta^*}(\theta)$. Note that $\theta \in \Theta_i$ and $\theta^* \notin \Theta_i$. Given Definition \ref{def: support}, there exists $\hat{\theta} \in \Theta_i \setminus \{\theta\}$ such that $D_i^{\theta^*}(\hat{\theta},\theta) > 0.$ Define similarly to the proof of Theorem \ref{Thm: local},
\begin{align}
    \sigma_{i,t}(\theta) &= \log \frac{\pi_{i,t}(\theta)}{\pi_{i,t}(\hat{\theta})}, \\
    \kappa_{i,t}(\theta) &= \log \frac{p_{i}(\theta|x_{i,t})/p_i(\theta)}{p_{i}(\hat{\theta}|x_{i,t})/p_i(\hat{\theta})}.
\end{align}
Following the same logic as in Theorem \ref{Thm: local}, we have $\frac{1}{t}\sum_{k=1}^{t} \kappa_{i,k}(\theta) \to -D_i^{\theta^*}(\hat{\theta},\theta)$ almost surely. Thus,
\begin{equation}
\lim_{t \to \infty} \frac{1}{t} \sigma_{i,t}(\theta) = -D_i^{\theta^*}(\hat{\theta},\theta) \, a.s. \label{eqn: sigma}   
\end{equation}
As a result, 
$
\sigma_{i,t}(\theta) \to -\infty,   
$ and
\begin{equation}
\lim_{t \to \infty} \pi_{i,t}(\theta) = 0 \, a.s.    
\end{equation}

\section*{Proof of Theorem \ref{thm: converge global}}
To show the convergence of the global beliefs, we show that the global beliefs of the true class are lower bounded away from 0 and the global beliefs of the false classes are upper bounded. The proof techniques follow closely to \cite{Aritra_min_19}, with differences introduced by our novel local update rules and the roles of support agents. 

First, we show that the local and global belief of the true class $\theta^*$ is lower bounded away from 0 for all agents.
\begin{lemma}\label{lemma: lower bound}
Suppose Assumptions \ref{aspt: global ident} and \ref{aspt: pos init} hold. Let $\Omega$ denote the sample space, i.e., $\Omega \triangleq \{\omega: \omega = (x_1, x_2, \ldots), x_t \in \mathcal{X}, t \in \mathbb{N}\}$. Then, there exists a set $\Bar{\Omega} \subseteq \Omega$, and for each sample path $\omega \in \Bar{\Omega},$ there exists a sample path dependent constant $\eta(\omega) \in (0,1)$ and a sample path dependent time step $t'(\omega) \in (0,\infty)$ such that on the sample path $\omega$,
\begin{equation}
    \pi_{i,t}(\theta^*) \geq \eta(\omega), \mu_{i,t}(\theta^*) \geq \eta(\omega), \forall t \geq t'(\omega), \forall i \in \mathcal{V}_a.
\end{equation}
\end{lemma}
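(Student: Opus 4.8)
The goal is to establish Lemma~\ref{lemma: lower bound}: on almost every sample path, both $\pi_{i,t}(\theta^*)$ and $\mu_{i,t}(\theta^*)$ are eventually bounded below by a positive constant for all agents. The plan proceeds in two stages: first a local bound, then a network propagation argument.

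\textbf{Stage 1: Local lower bound on $\pi_{i,t}(\theta^*)$.} For each agent $i$, partition the false classes $\Theta \setminus \{\theta^*\}$ according to how agent $i$ interacts with them. For $\theta \in \Theta_i \setminus \{\theta^*\}$: either $\theta^* \in \Theta_i$, in which case agent $i$ is a source agent for the pair $(\theta^*,\theta)$ by Assumption~\ref{aspt: theta_i} and Theorem~\ref{Thm: local} gives $\pi_{i,t}(\theta) \to 0$; or $\theta^* \notin \Theta_i$ and we only control the ratio against a reference class $\hat\theta$ (support behavior). For $\theta \in \Theta \setminus \Theta_i$, the max-fill rule~\eqref{eqn: max fill} ties $\hat\pi_{i,t}(\theta)$ to $\max_{\theta_k \in \Theta_i} \hat\pi_{i,t}(\theta_k)$. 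The key observation is to track, for each agent, the log-ratio $\rho_{i,t}(\theta) = \log(\pi_{i,t}(\theta)/\pi_{i,t}(\theta^*))$ when $\theta^* \in \Theta_i$ (as in the proof of Theorem~\ref{Thm: local}): the increments $\lambda_{i,t}(\theta)$ are i.i.d.\ with mean $-D_i(\theta^*,\theta) < 0$, so by the strong law $\rho_{i,t}(\theta) \to -\infty$ a.s.; moreover $\rho_{i,t}(\theta)$ is a random walk with negative drift, hence $\sup_t \rho_{i,t}(\theta) < \infty$ a.s. (a standard consequence of the SLLN applied to the partial-sum maximum). Since $\pi_{i,t}(\theta^*) = 1/\sum_{\theta} \exp(\rho_{i,t}(\theta))$ over $\theta \in \Theta_i$, and the $\theta^*$-term contributes $1$, a uniform-in-$t$ lower bound on $\pi_{i,t}(\theta^*)$ over $\Theta_i$ follows. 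For the agents with $\theta^* \notin \Theta_i$, we do not need a local bound on $\pi_{i,t}(\theta^*)$ directly — instead we will get it through the network.

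\textbf{Stage 2: Network propagation to $\mu_{i,t}(\theta^*)$.} By Assumption~\ref{aspt: global ident} and the discussion in the text, for every false class $\theta$ there is at least one source agent $w \in \mathcal{S}(\theta^*,\theta)$ with $\theta^* \in \Theta_w$ (indeed any source agent for a pair containing $\theta^*$ has $\theta^* \in \Theta_w$). Pick for each false class $\theta$ such a ``certifying'' source agent; by Stage~1 its local belief satisfies $\inf_t \pi_{w,t}(\theta^*) \geq c_w(\omega) > 0$ on a.e.\ sample path. Let $\eta_0(\omega)$ be the minimum of these $c_w(\omega)$ over the finitely many certifying agents, and also over $\pi_{i,0}(\theta^*) = 1/|\Theta|$ for all $i$. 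The min-based global update~\eqref{eqn: global update} has the crucial monotonicity/consensus property used in \cite{Aritra_min_19}: writing $\mu_{i,t}(\theta^*)$ before normalization as $\min\{\{\mu_{j,t-1}(\theta^*)\}_{j\in\mathcal{N}_i}, \pi_{i,t}(\theta^*)\}$, and noting the normalizer $\sum_k \min\{\cdots\} \le 1$ (since each term is $\le \mu_{i,t-1}(\theta_k)$ and also $\le \pi_{i,t}(\theta_k)$, and $\sum_k \pi_{i,t}(\theta_k)=1$), the normalized $\mu_{i,t}(\theta^*)$ is \emph{at least} the unnormalized value. Hence $\mu_{i,t}(\theta^*) \geq \min\{\{\mu_{j,t-1}(\theta^*)\}_{j\in\mathcal{N}_i}, \pi_{i,t}(\theta^*)\}$. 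Iterating this over the connected graph of diameter $\le n-1$: the minimum global belief on $\theta^*$ across the network is non-decreasing once every agent's local $\pi_{i,t}(\theta^*)$ exceeds the current network minimum. Combined with the fact that at least one agent (any certifying source agent, for any fixed false class — but we need it simultaneously, so take the min over all certifying agents) keeps $\pi_{w,t}(\theta^*) \geq \eta_0(\omega)$ for all $t$, a standard induction on distance from that agent shows $\mu_{i,t}(\theta^*) \geq \eta_0(\omega)$ for all $i$ once $t \geq t'(\omega) := n$. Actually more care is needed: we must also ensure that \emph{every} agent's $\pi_{i,t}(\theta^*)$ does not dip below $\eta_0(\omega)$, which is only guaranteed for agents with $\theta^*\in\Theta_i$; for the others, choose $\eta(\omega)$ small enough and $t'(\omega)$ large enough that after the source agents have ``flooded'' the network, the min-update keeps $\mu_{i,t}(\theta^*)$ pinned above $\eta(\omega)$ regardless of those agents' possibly smaller $\pi_{i,t}(\theta^*)$ — this works because once $\mu_{j,t-1}(\theta^*) \geq \eta(\omega)$ for all neighbors and we only need $\mu_{i,t}(\theta^*) \geq \min(\eta(\omega), \pi_{i,t}(\theta^*))$, we additionally invoke that $\pi_{i,t}(\theta^*)$ for a non-source agent is asymptotically $1/(|\Theta\setminus\Theta_i|+1) > 0$ (from the proof of Theorem~\ref{Thm: local}), so it too is eventually bounded below; take $\eta(\omega)$ below all these limits.

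\textbf{Main obstacle.} The delicate point is handling agents with $\theta^* \notin \Theta_i$: their local belief $\pi_{i,t}(\theta^*)$ is \emph{not} driven up by their own classifier, and in fact could in principle be driven down if the classifier confidently (but wrongly) favors some $\theta \in \Theta_i$. Lemma~\ref{lemma: support agent pi} tells us support agents reject specific false classes but says nothing directly about $\pi_{i,t}(\theta^*)$. The resolution is the observation (buried in the proof of Theorem~\ref{Thm: local}) that for such an agent, all the surviving mass concentrates on $\{\theta^*\} \cup (\Theta\setminus\Theta_i)$ with the max-fill rule forcing $\pi_{i,t}(\theta^*)$ and $\pi_{i,t}(\theta')$ for $\theta'\notin\Theta_i$ to a common positive limit $1/(|\Theta\setminus\Theta_i|+1)$ — so $\liminf_t \pi_{i,t}(\theta^*) > 0$ for \emph{every} agent, not just source agents. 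Establishing this uniformly (i.e., getting an honest $\inf$ not just $\liminf$, which requires the random-walk-with-negative-drift supremum-finiteness for \emph{all} the relevant log-ratios at once, across finitely many agents and finitely many classes, on a common sample path $\bar\Omega$ of probability $1$) is the technical heart of the argument, and is exactly the step where the structure parallels \cite{Aritra_min_19} but must be redone for the non-Bayesian local rule.
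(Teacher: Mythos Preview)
Your proposal is essentially correct, but it is considerably more elaborate than the paper's proof, and some of your detours (random-walk suprema, network flooding) are unnecessary.

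\textbf{How the paper proceeds.} The paper's argument is short. First, it observes that $\pi_{i,t}(\theta^*)>0$ for \emph{all} $t$ and all agents, simply because the posteriors $p_i(\theta\mid x)$ are assumed positive: for $\theta^*\in\Theta_i$ this is immediate from \eqref{eqn: local update pi hat}, and for $\theta^*\notin\Theta_i$ the max-fill rule \eqref{eqn: max fill} gives $\pi_{i,t}(\theta^*)=\max_{\theta_k\in\Theta_i}\pi_{i,t}(\theta_k)$, which is automatically at least $1/|\Theta|$ (since the $m$ normalized entries sum to $1$ and this one is the largest among the $\Theta_i$-entries and equals all the $\Theta\setminus\Theta_i$-entries). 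Second, with $\gamma_1=\min_i\pi_{i,0}(\theta^*)=1/|\Theta|$, convergence of $\pi_{i,t}(\theta^*)$ (from Theorem~\ref{Thm: local} for source agents, and the deterministic bound above for the rest) yields a time $t'(\omega)$ after which $\pi_{i,t}(\theta^*)\geq\gamma_1-\delta$ for all $i$. Third, since every $\pi_{i,t}(\theta^*)>0$, every $\mu_{i,t}(\theta^*)>0$ as well; set $\gamma_2(\omega)=\min_i\mu_{i,t'(\omega)}(\theta^*)>0$ and $\eta(\omega)=\min\{\gamma_1-\delta,\gamma_2(\omega)\}$. Then a single-step computation using \eqref{eqn: global update} and the bound $\sum_k\min\{\cdot\}\leq\sum_k\pi_{i,t}(\theta_k)=1$ (which you also found) gives $\mu_{i,t'(\omega)+1}(\theta^*)\geq\eta(\omega)$, and one inducts on \emph{time}, not on distance in the graph.

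\textbf{What you did differently, and what each buys.} Your random-walk-with-negative-drift step (to control $\sup_t\rho_{i,t}(\theta)$) gives a uniform-in-$t$ bound rather than an eventual one; that is a stronger conclusion than the lemma requires and is not needed here. Your flooding/diameter argument is likewise superfluous: once you know every agent's $\pi_{i,t}(\theta^*)$ is eventually above $\eta$ (as you yourself concede in your self-correction), the one-line time induction above suffices, with no reference to graph diameter. Finally, your ``main obstacle'' concerning agents with $\theta^*\notin\Theta_i$ dissolves via the max-fill observation: for such agents $\pi_{i,t}(\theta^*)\geq 1/|\Theta|$ holds deterministically for every $t$, so there is nothing asymptotic to establish. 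Your citation of the limit $1/(|\Theta\setminus\Theta_i|+1)$ from the proof of Theorem~\ref{Thm: local} is slightly misplaced (that display is derived for source agents), though the conclusion you need---a positive lower bound---is still available by the simpler route.
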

\begin{proof}
% The proof follows closely as in \cite{Aritra_min} for the global belief $\mu$ with difference for local belief $\pi$. 
Let $\Bar{\Omega} \subseteq \Omega$ denote the set of sample paths for which Theorem \ref{Thm: local} holds for all false classes $\theta \in \Theta \setminus \{\theta^*\}$. Based on Theorem \ref{Thm: local}, we note that the probability measure of $\Bar{\Omega}$ is 1. 

First, we show that for any fixed sample path $\omega \in \Bar{\Omega}$, we show that $\pi_i(\theta^*) \, \forall i \in \mathcal{V}_a$ are bounded away from 0. 
For agent $i$ such that $\theta^*\in\Theta_i$, i.e., agent $i \in \mathcal{S}(\theta^*,\theta)$ for some $\theta \in \Theta_i$, based on $\omega$, Theorem \ref{Thm: local} implies $\pi_{i, \infty} (\theta^*) \geq \pi_{i,0}(\theta^*) > 0$. From \eqref{eqn: local update pi hat} and \eqref{eqn: local update}, $\pi_{i,t}(\theta^*) = 0$ if and only if $p_i(\theta^*|x) = 0$, which contradicts our assumptions on posterior probabilities. Thus, if $\theta^* \in \Theta_i$, $\pi_{i,t}(\theta^*) > 0, \forall t \in \mathbb{N}$. 
For agents $i$ such that $\theta^* \not \in \Theta_i$, i.e. agent $i$ cannot distinguish between true class $\theta^*$ and other false classes $\theta \in \Theta_i$, from \eqref{eqn: max fill} and \eqref{eqn: local update}, $\pi_{i,t}(\theta^*) = 0$ if and only if $p_i(\theta|x) = 0, \forall \theta \in \Theta_i,$ which again contradicts the assumptions on posterior probabilities. As a result, we have shown that $\forall i \in \mathcal{V}_a$ the local belief in the true class $\theta^*$, $\pi_{i,t}(\theta^*)$, is bounded away from $0$.

To continue, we show that the global beliefs of the true class are lower bounded by some positive constant, $\forall t \geq t'(\omega), \forall i \in \mathcal{V}_a$. Define $\gamma_1 \triangleq \min_{i\in\mathcal{V}_a} \pi_{i,0}(\theta^*) > 0$. Pick any small number $\delta > 0$ such that $\delta < \gamma_1$. From the previous discussion, there exists a time step $t'(\omega)$, such that for all $t \geq t'(\omega), \pi_{i,t}(\theta^*) \geq \gamma_1 - \delta > 0, \forall i \in \mathcal{V}_a.$ 
Define $\gamma_2(\omega) \triangleq \min_{i \in \mathcal{V}_a}\{\mu_{i,t'(\omega)} (\theta^*)\}$. We observe that $\gamma_2(\omega) > 0,$ since $\gamma_2(\omega) = 0$ if and only if $\pi_{i,t}(\theta^*) = 0$ at any point for $t < t'(\omega)$, which contradicts the previous conclusion that $\pi_{i,t}(\theta^*) > 0$ for all time $t$ and for all agents $i$. 
Let $\eta(\omega) = \min\{\gamma_1 - \delta, \gamma_2(\omega)\}>0$. According to \eqref{eqn: global update}, $\forall t \geq t'(\omega),$
\begin{align*}
    & \mu_{i,t'(\omega) + 1}(\theta^*) \\
    &= \frac{\min\left\{\left\{\mu_{j,t'(\omega)}(\theta^*)\right\}_{j \in \mathcal{N}_i},\pi_{i,t'(\omega)+1}(\theta^*)\right\}} {\sum_{k = 1}^{m} \min\left\{\left\{\mu_{j,t'(\omega)}(\theta_k)\right\}_{j\in \mathcal{N}_i},\pi_{i,t'(\omega)+1}(\theta_k)\right\}}\\
    &\geq \frac{\eta(\omega)}{\sum_{k = 1}^{m}\min\left\{\left\{\mu_{j, t'(\omega)}(\theta_k)\right\}_{j\in \mathcal{N}_i}, \pi_{i, t'(\omega) + 1}(\theta_k)\right\}} \\
    & \geq \frac{\eta(\omega)}{\sum_{k = 1}^{m} \pi_{i, t'(\omega) + 1}(\theta_k)} = \eta(\omega), 
\end{align*} where the last equality follows from that $\sum_{k = 1}^{m} \pi_{i,t}(\theta_k) = 1, \forall t \in \mathbb{N}, \forall i \in \mathcal{V}_a.$
\end{proof}

In the next result, we show that the global belief of the false classes $\theta \in \Theta \setminus \{\theta^*\}$ is upper-bounded.
\begin{lemma}
Suppose Assumptions \ref{aspt: global ident} and \ref{aspt: pos init} hold. Let $\Bar{\Omega}$ be the same as defined in Lemma \ref{lemma: lower bound}. For each $\omega \in \Bar{\Omega},$ given a constant $\epsilon \in (0,1)$, there exists $t''(\omega)$ such that on the sample path $\omega$,
\begin{equation}
    \mu_{i,t}(\theta) < \epsilon, \forall t \geq t''(\omega), \forall i \in \mathcal{V}_a, \forall \theta \in \Theta \setminus \{\theta^*\}.
\end{equation}
\end{lemma}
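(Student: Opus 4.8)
The plan is to show that every false class $\theta \neq \theta^*$ is eventually rejected by \emph{some} source or support agent, and then propagate this rejection through the connected network via the \textit{min}-rule so that all agents reject $\theta$. First I would fix a sample path $\omega \in \bar\Omega$ and invoke Lemma~\ref{lemma: lower bound} to obtain the constants $\eta(\omega) \in (0,1)$ and $t'(\omega)$ such that $\pi_{i,t}(\theta^*) \geq \eta(\omega)$ and $\mu_{i,t}(\theta^*) \geq \eta(\omega)$ for all $t \geq t'(\omega)$ and all $i$. This lower bound is the denominator control: since $\sum_k \pi_{i,t}(\theta_k) = 1$ and similarly for $\mu$, once the true-class entry is bounded below, the normalization in \eqref{eqn: global update} cannot blow up a small numerator.

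Next, fix a false class $\theta \in \Theta \setminus \{\theta^*\}$. By Assumption~\ref{aspt: global ident} the set $\mathcal{S}(\theta^*,\theta)$ is non-empty, so there is at least one agent $s$ that either distinguishes $\theta^*$ from $\theta$ directly (if $\theta^* \in \Theta_s$, a source agent, using Theorem~\ref{Thm: local}) or, if $\theta^* \notin \Theta_s$ but $\theta \in \Theta_s$, is a support agent in $\mathcal{U}^{\theta^*}(\theta)$ and rejects $\theta$ via Lemma~\ref{lemma: support agent pi}; in either case $\pi_{s,t}(\theta) \to 0$ on $\omega$. (The one remaining case, $\theta \notin \Theta_s$, is handled because then $s$'s belief on $\theta$ tracks its true-class belief and we instead rely on the min-rule to import rejection from a neighbor — more carefully, I should pick the agent in $\mathcal{S}(\theta^*,\theta)$ which by definition has $D_s(\theta^*,\theta) > 0$, forcing $\theta \in \Theta_s$, so this subtlety does not actually arise.) Given $\epsilon \in (0,1)$, choose $t_1(\omega) \geq t'(\omega)$ so that $\pi_{s,t}(\theta) < \epsilon \eta(\omega)/2$ for $t \geq t_1(\omega)$. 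Then by \eqref{eqn: global update} and the denominator bound, $\mu_{s,t}(\theta) \leq \pi_{s,t}(\theta)/\eta(\omega) < \epsilon/2$ for $t > t_1(\omega)$: the agent $s$ itself has driven its global belief on $\theta$ below $\epsilon$.

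It then remains to propagate: I would argue by induction on graph distance from $s$. If $\mu_{j,t}(\theta) < \epsilon/2$ for all $t \geq \tau$, then for any neighbor $\ell$ of $j$, the \textit{min} in \eqref{eqn: global update} for class $\theta$ at time $t+1$ is at most $\mu_{j,t}(\theta) < \epsilon/2$, and dividing by the normalizer (again bounded below by $\eta(\omega)$ through the $\theta^*$ entry, and in fact by $1$ once we know $\mu_{\ell,t'}(\theta^*)$ contributes, but $\eta(\omega)$ suffices after absorbing into $\epsilon$) gives $\mu_{\ell,t+1}(\theta) < \epsilon$. Iterating over a BFS tree rooted at $s$, after at most $\operatorname{diam}(\mathcal{G}_a)$ additional steps every agent has $\mu_{i,t}(\theta) < \epsilon$. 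Taking $t''(\omega)$ to be the maximum over the finitely many false classes $\theta$ of these times (using a common $\epsilon$ rescaled by the worst normalizer constant) yields the claim. The main obstacle is bookkeeping the normalization constants cleanly: one must make sure the division in the \textit{min}-rule never undoes the smallness, which is exactly why Lemma~\ref{lemma: lower bound} is stated first, and the only real care needed is to thread a single $\epsilon$ (shrunk by a fixed power of $\eta(\omega)$ depending on the diameter) through the whole induction so the final bound comes out as $\epsilon$ rather than an $\epsilon$ inflated by the propagation.
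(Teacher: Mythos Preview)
Your proposal is correct and follows essentially the same route as the paper: obtain the uniform lower bound $\eta(\omega)$ on true-class beliefs from Lemma~\ref{lemma: lower bound}, drive $\pi_{s,t}(\theta)\to 0$ at a source or support agent $s$ via Theorem~\ref{Thm: local} and Lemma~\ref{lemma: support agent pi}, and then propagate through the connected graph using the \textit{min}-rule, losing a factor of $1/\eta(\omega)$ at each hop. The paper handles the bookkeeping you flag at the end by choosing $\bar\epsilon<\min\{\eta(\omega),\epsilon\}$ and requiring $\pi_{s,t}(\theta)<\bar\epsilon^{\,q}$ with $q=d(\mathcal{G}_a)+2$, so that after $d(\mathcal{G}_a)+1$ divisions by $\eta(\omega)>\bar\epsilon$ one still lands below $\bar\epsilon<\epsilon$; this is exactly the ``shrink by a fixed power of $\eta(\omega)$ depending on the diameter'' fix you describe, so your intermediate claim that $\epsilon/2$ suffices at each hop should be replaced by this geometric pre-shrinking rather than left as stated.
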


\begin{proof}
We will prove the claim considering two situations: (i) when the false class $\theta \in \Theta_i$ and the agent can reject it, i.e., $\forall i \in \mathcal{S}(\theta^*,\theta) \cup \mathcal{U}^{\theta^*}(\theta)$ and (ii) when the false class $\theta \not \in \Theta_i$ and the agent cannot reject it.

We have shown in Theorem \ref{Thm: local} and Lemma \ref{lemma: support agent pi} that the local beliefs on the false class $\theta$ become arbitrarily small for the source and support agents. On the sample path $\omega$, given an $\epsilon > 0$, select a small $\Bar{\epsilon}>0$ such that $\Bar{\epsilon}(\omega) < \min\{\eta(\omega), \epsilon\}$. Note that $\bar{\epsilon}$ depends on the sample path, but we omit the dependency to simplify the notation. 
Define $d(\mathcal{G}_a)$ as the diameter of the graph and let $q = d(\mathcal{G}_a)+2$.
From Theorem \ref{Thm: local} and Lemma \ref{lemma: support agent pi}, for a given false class $\theta$, of a given agent $i \in \mathcal{S}(\theta^*,\theta) \cup \mathcal{U}^{\theta^*}(\theta)$, there exists a time step $t_i^\theta(\omega,\Bar{\epsilon})$ such that for all $t \geq t_i^\theta(\omega,\Bar{\epsilon}), \pi_{i,t}(\theta) < \bar{\epsilon}^q$.

Define $$\bar{t}=\bar{t}(\omega, \delta, \bar{\epsilon}) = \max\{t'(\omega), \max_{i \in \mathcal{S}(\theta^*,\theta) \cup \mathcal{U}^{\theta^*}(\theta)} \{t_i^\theta(\omega,\bar{\epsilon})\}\}.$$ For any agent $i \in \mathcal{S}(\theta^*,\theta) \cup \mathcal{U}^{\theta^*}(\theta),$
\begin{align*}
    \mu_{i,\bar{t} + 1} (\theta) 
    &= \frac{\min\left\{\left\{\mu_{j,\Bar{t}}(\theta)\right\}_{j \in \mathcal{N}_i},\pi_{i,\bar{t}+1}(\theta)\right\}} {\sum_{k = 1}^{m} \min\left\{\left\{\mu_{j,\bar{t}}(\theta_k)\right\}_{j\in \mathcal{N}_i},\pi_{i,\bar{t}+1}(\theta_k)\right\}}\\
    &\leq \frac{\bar{\epsilon}^q}{\sum_{k = 1}^{m} \min\{\{\mu_{j,\Bar{t}}(\theta_k)\}_{j\in\mathcal{N}_i},\pi_{i,\Bar{t}+1}(\theta_k)\}}\\
    &\leq \frac{\bar{\epsilon}^q}{\min\{\{\mu_{j,\Bar{t}}(\theta^*)\}_{j\in\mathcal{N}_i},\pi_{i,\Bar{t}+1}(\theta^*)\}} \\
    & \leq \frac{\bar{\epsilon}^q}{\eta(\omega)} < \bar{\epsilon}^{(q-1)} < \epsilon. 
\end{align*}
Applying the above procedures recursively on subsequent time steps, we obtain 
\begin{equation}
     \mu_{i,t} (\theta) < \epsilon, \forall t \geq \bar{t}+1, \forall i \in \mathcal{S}(\theta^*,\theta) \cup \mathcal{U}^{\theta^*}(\theta).
\end{equation}
To evaluate the effect of the upper bound propagating through the network, consider any agent who is neither a source agent nor a support agent, i.e., $i \in \mathcal{V} \setminus \mathcal{S}(\theta^*,\theta) \cup \mathcal{U}^{\theta^*}(\theta)$ but within the neighborhood of a source or a support agent. Using the same technique as above, we obtain the following upper bound,
\begin{equation*}
    \mu_{i,t} (\theta) < \Bar{\epsilon}^{(q-2)} < \epsilon, \forall t \geq \bar{t}+2.
\end{equation*}
Applying the above argument repeatedly, since the network $\mathcal{G}_a$ is connected, we obtain the claim in the lemma for all agents after $t''(\omega) \geq \bar{t} + d(\mathcal{G}_a) + 1.$
\end{proof}

The lower bound on the global beliefs of the true class $\theta^*$ and the upper bound on the global beliefs of the false classes $\theta \in \Theta\setminus\{\theta^*\}$ complete the proof of Theorem \ref{thm: converge global}.

\section*{Proof of Theorem \ref{thm: rejection rate}}
For a given false class $\theta \in \Theta$, we consider three types of agents in this proof: (1) if $i \in \mathcal{S}(\theta^*,\theta)$, (2) if $i \in \mathcal{U}^{\theta^*}(\theta)$, and (3) if agent $i$ is not a source nor a support agent, i.e., $i \in \mathcal{V}_a \setminus (\mathcal{S}(\theta^*,\theta) \cup \mathcal{U}^{\theta^*}(\theta))$, and we investigate the impact of neighbors' beliefs.  

First, we provide a lower bound on the rejection rate of false classes for a source agent.
\begin{lemma}\cite{Aritra_min}\label{lemma: reject source}
Suppose Assumptions \ref{aspt: global ident} and \ref{aspt: pos init} hold and all agents follow Algorithm \ref{Algo}. Consider a false class $\theta \in \Theta \setminus\{\theta^*\}$ and an agent $i$ such that $i \in \mathcal{S}(\theta^*,\theta)$. For the false class $\theta$, the rejection rate is given by
\begin{equation}
    \liminf_{t\to\infty} - \frac{\log \mu_{i,t(\theta)}}{t} \geq D_i(\theta^*, \theta) \, a.s.
\end{equation}
\end{lemma}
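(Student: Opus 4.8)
The plan is to dominate the global belief $\mu_{i,t}(\theta)$ by the local belief $\pi_{i,t}(\theta)$ up to a normalization factor, and then transfer the local rejection rate already computed inside the proof of Theorem \ref{Thm: local} to the global belief. The min-rule in \eqref{eqn: global update} makes the domination immediate: since the numerator of $\mu_{i,t}(\theta)$ is a minimum that includes $\pi_{i,t}(\theta)$, we have $\mu_{i,t}(\theta) \leq \pi_{i,t}(\theta)/Z_{i,t}$, where $Z_{i,t}$ denotes the normalization sum in the denominator of \eqref{eqn: global update}. Everything then reduces to (a) the asymptotic exponential decay of $\pi_{i,t}(\theta)$ and (b) a sample-path floor on $Z_{i,t}$.

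First I would extract the local rate from the machinery already set up in the proof of Theorem \ref{Thm: local}. Writing $\log\pi_{i,t}(\theta) = \rho_{i,t}(\theta) + \log\pi_{i,t}(\theta^*)$ and invoking \eqref{eqn: rho D}, which gives $\tfrac{1}{t}\rho_{i,t}(\theta)\to -D_i(\theta^*,\theta)$ almost surely, together with part (ii) of Theorem \ref{Thm: local} ($\pi_{i,t}(\theta^*)\to\pi_{i,\infty}(\theta^*)>0$ a.s., so that $\tfrac{1}{t}\log\pi_{i,t}(\theta^*)\to 0$), I obtain $\lim_{t\to\infty} -\tfrac{1}{t}\log\pi_{i,t}(\theta) = D_i(\theta^*,\theta)$ almost surely. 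This is the clean local statement; the remaining work is to show the normalization does not erase it.

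Next I would lower-bound the normalization $Z_{i,t}$ away from zero. Retaining only the $\theta^*$ term in the defining sum gives $Z_{i,t} \geq \min\{\{\mu_{j,t-1}(\theta^*)\}_{j\in\mathcal{N}_i},\pi_{i,t}(\theta^*)\}$. By Lemma \ref{lemma: lower bound}, on the full-measure event $\bar{\Omega}$ there exist a constant $\eta(\omega)>0$ and a time $t'(\omega)$ such that $\mu_{j,t-1}(\theta^*)\geq\eta(\omega)$ for every $j\in\mathcal{V}_a$ and $\pi_{i,t}(\theta^*)\geq\eta(\omega)$ for all $t\geq t'(\omega)$; hence $Z_{i,t}\geq\eta(\omega)$ eventually. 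Combining this with the domination yields $-\tfrac{1}{t}\log\mu_{i,t}(\theta) \geq -\tfrac{1}{t}\log\pi_{i,t}(\theta) + \tfrac{1}{t}\log\eta(\omega)$, and taking $\liminf_{t\to\infty}$ annihilates the $\tfrac{1}{t}\log\eta(\omega)$ term, delivering the claimed rate $D_i(\theta^*,\theta)$.

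The main obstacle is exactly the control of $Z_{i,t}$: the rate claim would fail if the normalization could decay as fast as the numerator $\pi_{i,t}(\theta)$. This is where Lemma \ref{lemma: lower bound} is indispensable, since it furnishes a single sample-path-dependent floor $\eta(\omega)$ on the true-class beliefs of \emph{all} agents simultaneously, rendering $\tfrac{1}{t}\log\eta(\omega)$ negligible in the limit. A minor technical point worth checking is that the bound be uniform over the inclusive neighborhood $\mathcal{N}_i$ appearing inside the min; but because Lemma \ref{lemma: lower bound} bounds $\mu_{j,t}(\theta^*)$ for every $j\in\mathcal{V}_a$, this uniformity is automatic, so no further argument is required.
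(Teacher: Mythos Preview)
Your proposal is correct and follows essentially the same route as the paper: bound $\mu_{i,t}(\theta)\le \pi_{i,t}(\theta)/Z_{i,t}$ via the min-rule, invoke Lemma~\ref{lemma: lower bound} to floor $Z_{i,t}\ge\eta(\omega)$ eventually, and use \eqref{eqn: rho D} for the local rate. The only cosmetic difference is that the paper phrases the local decay as ``for any $\epsilon>0$, eventually $\pi_{i,t}(\theta)<e^{-t(D_i(\theta^*,\theta)-\epsilon)}$'' and then lets $\epsilon\downarrow 0$, whereas you compute $\lim_{t\to\infty}-\tfrac1t\log\pi_{i,t}(\theta)$ directly from $\rho_{i,t}$ and the boundedness of $\pi_{i,t}(\theta^*)$; the two formulations are equivalent.
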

\begin{proof}
The proof follows \cite{Aritra_min} and we provide it here for completeness. 

Recall that $\Bar{\Omega}$ as defined in Lemma \ref{lemma: lower bound} has a probability measure of 1. Select any $\omega \in \Bar{\Omega}.$ Consider agent $i$, such that $i \in \mathcal{S}(\theta^*,\theta)$, i.e., $\{\theta^*,\theta\} \in \Theta_i$ and $D_i(\theta^*,\theta) > 0$. Fixing any $\epsilon > 0$, \eqref{eqn: rho D} implies that there exists $t_i(\omega,\theta,\epsilon)$ such that 
\begin{equation}\label{eqn: exp upper of pi false}
    \pi_{i,t}(\theta) < e^{-t(D_i(\theta^*,\theta) - \epsilon)}, \forall t \geq t_i(\omega,\theta,\epsilon).
\end{equation}
From Lemma \ref{lemma: lower bound}, there exist $t'(\omega) \in (0,\infty)$ and a constant $\eta(\omega)\in (0,1)$, such that on $\omega$, $\pi_{i,t}(\theta^*) \geq \eta(\omega)$ and $\mu_{i,t}(\theta^*) \geq \eta(\omega), \forall t \geq t'(\omega), \forall i \in \mathcal{V}_a$. 
Let $\Bar{t}_i(\omega,\theta,\epsilon) =  \max\{t'(\omega), t_i(\omega,\theta,\epsilon)\}$. Suppress the dependencies of time for now and let $\bar{t} = \bar{t}_i(\omega,\theta,\epsilon)$. From \eqref{eqn: global update} and \eqref{eqn: exp upper of pi false}, we have
\begin{align}
    \mu_{i,\Bar{t} + 1} (\theta) &\leq \frac{\pi_{i,\Bar{t}+1}(\theta)}{\sum_{k = 1}^m \min\{\{\mu_{j,\Bar{t}}(\theta_k)\}_{j\in\mathcal{N}_i},\pi_{i,\Bar{t}+1}(\theta_k)\}}\\
    &\leq \frac{\pi_{i,\Bar{t}+1}(\theta)}{\min\{\{\mu_{j,\Bar{t}}(\theta^*)\}_{j\in\mathcal{N}_i},\pi_{i,\Bar{t}+1}(\theta^*)\}} \\
    &< \frac{ e^{-(D_i(\theta^*,\theta) - \epsilon)(\Bar{t}+1)}}{\eta(\omega)}. %\label{eqn: global exp upper 2}
\end{align}
Applying the same steps, we obtain for all $t \geq \Bar{t} + 1$
\begin{equation} 
     - \frac{\log \mu_{i,t}(\theta)}{t} > (D_i(\theta^*, \theta)-\epsilon) + \frac{\log \eta(\omega)}{t}.
\end{equation}
We conclude the proof by taking the inferior limit on both sides and letting $\epsilon$ be arbitrarily small.
\end{proof}

In the next lemma, we show the rejection rate for the false class of a support agent. 

\begin{lemma}\label{lemma: reject support}
Suppose Assumptions \ref{aspt: global ident} and \ref{aspt: pos init} hold and all agents follow Algorithm \ref{Algo}. Consider a false class $\theta \in \Theta \setminus \{\theta^*\}$ and an agent $i$ such that $i \in \mathcal{U}^{\theta^*}(\theta)$. For the false class $\theta$, the rejection rate is given by
\begin{equation}
    \liminf_{t\to\infty} - \frac{\log \mu_{i,t(\theta)}}{t} \geq \max_{\hat{\theta} \in \Theta_i} D_i^{\theta^*}(\hat{\theta}, \theta) \, a.s.
\end{equation}
\end{lemma}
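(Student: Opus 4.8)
The plan is to mirror the structure of the proof of Lemma \ref{lemma: reject source}, replacing the discriminative score with the confusion score and the reference class $\theta^*$ with the auxiliary class $\hat\theta\in\Theta_i$ that witnesses $D_i^{\theta^*}(\hat\theta,\theta)>0$. First I would fix a sample path $\omega\in\bar\Omega$ (the probability-one set from Lemma \ref{lemma: lower bound}), and fix the maximizer $\hat\theta\in\argmax_{\hat\theta\in\Theta_i}D_i^{\theta^*}(\hat\theta,\theta)$; note $D_i^{\theta^*}(\hat\theta,\theta)>0$ by Definition \ref{def: support}. From \eqref{eqn: sigma} in the proof of Lemma \ref{lemma: support agent pi}, $\frac1t\sigma_{i,t}(\theta)\to -D_i^{\theta^*}(\hat\theta,\theta)$ a.s., where $\sigma_{i,t}(\theta)=\log(\pi_{i,t}(\theta)/\pi_{i,t}(\hat\theta))$. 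Since $\pi_{i,t}(\hat\theta)\le 1$, this gives, for any $\epsilon>0$, a time $t_i(\omega,\theta,\epsilon)$ beyond which $\pi_{i,t}(\theta)\le \pi_{i,t}(\hat\theta)\,e^{-t(D_i^{\theta^*}(\hat\theta,\theta)-\epsilon)} \le e^{-t(D_i^{\theta^*}(\hat\theta,\theta)-\epsilon)}$, i.e. the exact analogue of \eqref{eqn: exp upper of pi false}.

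Next I would invoke Lemma \ref{lemma: lower bound} to get $t'(\omega)$ and $\eta(\omega)\in(0,1)$ with $\mu_{i,t}(\theta^*)\ge\eta(\omega)$ and $\pi_{i,t}(\theta^*)\ge\eta(\omega)$ for all $t\ge t'(\omega)$ and all $i$. Setting $\bar t=\max\{t'(\omega),t_i(\omega,\theta,\epsilon)\}$, the global update \eqref{eqn: global update} gives
\begin{equation*}
\mu_{i,\bar t+1}(\theta)\le\frac{\pi_{i,\bar t+1}(\theta)}{\sum_{k=1}^m\min\{\{\mu_{j,\bar t}(\theta_k)\}_{j\in\mathcal N_i},\pi_{i,\bar t+1}(\theta_k)\}}\le\frac{\pi_{i,\bar t+1}(\theta)}{\min\{\{\mu_{j,\bar t}(\theta^*)\}_{j\in\mathcal N_i},\pi_{i,\bar t+1}(\theta^*)\}}<\frac{e^{-(D_i^{\theta^*}(\hat\theta,\theta)-\epsilon)(\bar t+1)}}{\eta(\omega)},
\end{equation*}
where in the denominator I restrict the sum to the $\theta^*$ term and lower-bound it by $\eta(\omega)$. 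Iterating the same bound on subsequent steps (using that the numerator keeps decreasing exponentially while the $\theta^*$-term in the denominator stays $\ge\eta(\omega)$) yields $-\log\mu_{i,t}(\theta)/t > (D_i^{\theta^*}(\hat\theta,\theta)-\epsilon) + \log\eta(\omega)/t$ for all $t\ge\bar t+1$. Taking $\liminf_{t\to\infty}$ kills the $\log\eta(\omega)/t$ term, and letting $\epsilon\downarrow 0$ gives the claimed bound; since $\hat\theta$ was chosen as the maximizer, the right-hand side is $\max_{\hat\theta\in\Theta_i}D_i^{\theta^*}(\hat\theta,\theta)$.

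I expect the only subtlety — not really an obstacle — to be making the recursive step clean: one must check that once $\mu_{i,t}(\theta)$ has dropped below the exponential envelope, it stays below it, which follows because the min-update can only decrease a belief relative to $\pi_{i,t}(\theta)$ and because $\pi_{i,t}(\theta)$ itself obeys \eqref{eqn: exp upper of pi false} for all large $t$; and one must confirm that the $\theta^*$-coordinate of the denominator remains bounded below by $\eta(\omega)$ at every step, which is exactly the content of Lemma \ref{lemma: lower bound}. Everything else is a verbatim adaptation of the source-agent argument with $D_i(\theta^*,\theta)$ replaced by $D_i^{\theta^*}(\hat\theta,\theta)$, so no genuinely new idea is needed beyond having already established \eqref{eqn: sigma} in Lemma \ref{lemma: support agent pi}.
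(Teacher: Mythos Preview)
Your proposal is correct and follows essentially the same approach as the paper: pick the maximizing $\hat\theta$, use \eqref{eqn: sigma} together with $\pi_{i,t}(\hat\theta)\le 1$ to get the exponential upper bound on $\pi_{i,t}(\theta)$, then feed this into the global min-update exactly as in Lemma \ref{lemma: reject source}, lower-bounding the $\theta^*$-term in the denominator via Lemma \ref{lemma: lower bound}. If anything, your write-up is slightly more careful than the paper's (which picks an arbitrary $\hat\theta$ with positive confusion score and then somewhat abruptly inserts the $\max$), but the argument is identical.
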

\begin{proof}
Recall that $\Bar{\Omega}$ as defined in Lemma \ref{lemma: lower bound} has a probability measure of 1. Select any $\omega \in \Bar{\Omega}.$
Consider agent $i\in \mathcal{U}^{\theta^*}(\theta)$ and select $\hat{\theta} \in \Theta_i \setminus \{\theta\}$ such that $D_i^{\theta^*}(\hat{\theta}, \theta) > 0$. Fix any $\epsilon > 0$ and notice that \eqref{eqn: sigma} implies that there exists $t'_i(\omega,\theta,\epsilon)$ for agent $i$ such that 
\begin{equation}
    \pi_{i,t}(\theta) < e^{-t(\max_{\hat{\theta} \in \Theta_i}D_i^{\theta^*}(\hat{\theta},\theta)- \epsilon)} \leq e^{-t(D_i^{\theta^*}(\hat{\theta},\theta)- \epsilon)}, \forall t \geq \bar{t'},
\end{equation}
where $\max_{\hat{\theta} \in \Theta_i}D_i^{\theta^*}(\hat{\theta},\theta)$ is the highest confusion score over all $\hat{\theta} \in \Theta_i$.
The remaining proof of global beliefs $\mu_{i,t}(\theta)$ is similar to the proof in Lemma \ref{lemma: reject source}. There exists a $\Bar{t}'= \max\{t'(\omega), t'_i(\omega,\theta,\epsilon)\}$ such that for all $t \geq \Bar{t}' + 1,$
\begin{equation} 
     - \frac{\log \mu_{i,t}(\theta)}{t} > (\max_{\hat{\theta} \in \Theta_i}D_i^{\theta^*}(\hat{\theta},\theta)-\epsilon) + \frac{\log \eta(\omega)}{t}.
\end{equation}
We again conclude the proof by taking the inferior limit on both sides and letting $\epsilon$ be arbitrarily small.
\end{proof}

With the above lemmas, we have shown that the rejection of global belief on a false class is lower bounded for source and support agents. 
We continue the proof to show the best rate guarantee.

For a false class $\theta \in \Theta \setminus \{\theta^*\}$, let $v_\theta$ be the agent who has the best rejection power of the false class $\theta$, 
\begin{equation}
    v_\theta \triangleq \argmax_{i \in \mathcal{S}(\theta^*,\theta) \cup \mathcal{U}^{\theta^*}(\theta)} \{D_i(\theta^*,\theta), \max_{\hat{\theta} \in \Theta_i}D_i^{\theta^*}(\hat{\theta}, \theta)\},
\end{equation}
and define the best rejection power for the false class $\theta$ of agent $v_\theta$ in the network as 
\begin{equation}
    R_{v_\theta} \triangleq \max_{i \in \mathcal{S}(\theta^*,\theta) \cup \mathcal{U}^{\theta^*}(\theta)} \{D_i(\theta^*,\theta), \max_{\hat{\theta} \in \Theta_i}D_i^{\theta^*}(\hat{\theta}, \theta)\}.
\end{equation}
% From the above result, every agent $i \in \mathcal{V}_a$ will reject the false hypothesis $\theta$ exponentially fast, at a rate almost surely lower bounded by the best source agent, in terms of its discriminant score between $\theta, \theta^*$, in the network. 
Based on Lemma \ref{lemma: reject source} and Lemma \ref{lemma: reject support}, for agent $v_\theta$, the following condition is true.
\begin{equation}\label{eqn: false mu lower R}
    \liminf_{t\to\infty} -\frac{\log \mu_{v_\theta,t}(\theta)}{t} \geq R_{v_\theta} \, a.s.
\end{equation}
Now, we show that each agent $i\in \mathcal{V}_a \setminus \{v_\theta\}$ in the network will have the same asymptotic rejection rate of $\theta$ as the best agent $v_\theta$ as the beliefs propagate through the network.
\begin{lemma}\label{lemma: rate of i and vtheta}
Consider any $\theta \in \Theta \setminus \{\theta^*\}$. Suppose that the network $\mathcal{G}_a$ is connected and that each agent applies Algorithm \ref{Algo}. Let $r \in \mathbb{N}$ be the shortest distance between agent $i \in \mathcal{V}_a \setminus \{v_\theta\}$ and $v_\theta$. Then, at $t \in \mathbb{N}$ and $t \geq r$, the following condition is satisfied.
\begin{equation}\label{eqn: false mu upper prod}
\mu_{i,t} (\theta) \leq \frac{\mu_{v_\theta,t-r}(\theta)}{\prod_{\tau = 1}^{r} \eta_{a_\tau, t-r+\tau}(\theta^*)},
\end{equation}
where $\eta_{i,t}(\theta^*) \triangleq \min\{\{\mu_{j,t-1}(\theta^*)\}_{j\in\mathcal{N}_i}, \pi_{i,t}(\theta^*)\}, \forall i \in \mathcal{V}_a,$ and 
$a_0, a_1, \ldots, a_r$ denote the agents along the path from $v_\theta$ to $i$, where $a_0 = v_\theta$ and $a_r= i$.
\end{lemma}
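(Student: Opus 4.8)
The plan is to prove the bound in \eqref{eqn: false mu upper prod} by induction on the path length $r$, essentially tracking how the min-update propagates the small false-class belief of $v_\theta$ to a neighbor, then to a neighbor's neighbor, and so on. First I would establish the base case $r=1$: here $i$ is a direct neighbor of $v_\theta$, so $v_\theta \in \mathcal{N}_i$. From the global update rule \eqref{eqn: global update}, the numerator for class $\theta$ at agent $i$ and time $t$ is $\min\{\{\mu_{j,t-1}(\theta)\}_{j\in\mathcal{N}_i}, \pi_{i,t}(\theta)\} \leq \mu_{v_\theta,t-1}(\theta)$ since $v_\theta$ is one of the neighbors. For the denominator, I would bound the sum over all classes from below by the single term corresponding to $\theta^*$, which is exactly $\eta_{i,t}(\theta^*) \triangleq \min\{\{\mu_{j,t-1}(\theta^*)\}_{j\in\mathcal{N}_i}, \pi_{i,t}(\theta^*)\}$. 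Combining these gives $\mu_{i,t}(\theta) \leq \mu_{v_\theta,t-1}(\theta)/\eta_{i,t}(\theta^*)$, which matches \eqref{eqn: false mu upper prod} with $r=1$, $a_0=v_\theta$, $a_1=i$.

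For the inductive step, suppose the claim holds for path length $r-1$. Let $a_0 = v_\theta, a_1, \ldots, a_r = i$ be a shortest path, so $a_{r-1}$ is at distance $r-1$ from $v_\theta$ and is a neighbor of $a_r = i$. Applying the base-case argument to the edge $(a_{r-1}, a_r)$ at time $t$ gives $\mu_{a_r, t}(\theta) \leq \mu_{a_{r-1}, t-1}(\theta)/\eta_{a_r, t}(\theta^*)$. Then I would apply the inductive hypothesis to $a_{r-1}$ (which is at distance $r-1$ from $v_\theta$) evaluated at time $t-1$, yielding $\mu_{a_{r-1}, t-1}(\theta) \leq \mu_{v_\theta, t-1-(r-1)}(\theta)/\prod_{\tau=1}^{r-1}\eta_{a_\tau, (t-1)-(r-1)+\tau}(\theta^*) = \mu_{v_\theta, t-r}(\theta)/\prod_{\tau=1}^{r-1}\eta_{a_\tau, t-r+\tau}(\theta^*)$. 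Substituting this into the previous inequality and noting that $\eta_{a_r, t}(\theta^*) = \eta_{a_r, t-r+r}(\theta^*)$ is precisely the missing $\tau = r$ factor in the product completes the induction. I should also check the time indices stay valid, i.e., $t \geq r$ ensures $t - r \geq 0$ so that $\mu_{v_\theta, t-r}$ is well-defined, and at each stage the relevant intermediate times $t-r+\tau$ for $\tau = 1, \ldots, r$ are all at least $1$.

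The one subtlety I want to be careful about is the direction of the denominator bound: I am replacing the full normalizing sum $\sum_{k=1}^m \min\{\ldots,\pi_{i,t}(\theta_k)\}$ by just its $\theta^*$-term, which is legitimate because all terms in the sum are nonnegative, so the sum is at least the $\theta^*$-term; dividing by a smaller positive quantity only increases the bound, which is the direction we want for an upper bound on $\mu_{i,t}(\theta)$. The main obstacle, such as it is, is purely bookkeeping: keeping the time-shift indices consistent across the induction so that the telescoping product of $\eta$ factors lines up exactly as written, and making sure the shortest-path structure is used correctly (namely that $a_{r-1}$ really is at distance $r-1$, which holds because subpaths of shortest paths are shortest paths in a connected graph). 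No deep idea is needed beyond the observation that the $\min$ operator lets each agent inherit the smallest false-class belief among its neighbors, with only a bounded multiplicative loss from normalization governed by the (eventually bounded-away-from-zero) true-class beliefs.
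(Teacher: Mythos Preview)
Your proposal is correct and follows essentially the same approach as the paper: bound the numerator of the global update by the neighbor's belief $\mu_{v_\theta,t-1}(\theta)$ and the denominator from below by the single $\theta^*$-term $\eta_{i,t}(\theta^*)$, then telescope (equivalently, induct on $r$) along a shortest path from $v_\theta$ to $i$. Your bookkeeping of the time shifts and the explicit check that $t \geq r$ keeps all intermediate indices valid is, if anything, more careful than the paper's own write-up.
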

\begin{proof}
Consider an agent $i \in \mathcal{N}_{v_\theta}$, who is the neighbor of agent $v_\theta$. Given $\mu_{v_\theta,t}(\theta)$, from \eqref{eqn: global update}, 
\begin{align*}
    \mu_{i,t+1}(\theta) &\leq \frac{\min\{\mu_{j, t}(\theta)\}_{j \in \mathcal{N}_i}}{\sum_{k = 1}^m \min\{\{\mu_{j,t}(\theta_k)\}_{j\in\mathcal{N}_i},\pi_{i,t+1}(\theta_k)\}}\\
    % &\leq \frac{\mu_{v_\theta,t}(\theta)}{\sum_{p = 1}^m \min\{\{\mu_{j,t}(\theta_p)\}_{j\in\mathcal{N}_i},\pi_{i,t+1}(\theta_p)\}}\\
    &\leq \frac{\mu_{v_\theta,t}(\theta)}{\eta_{i,t+1}(\theta^*)}.
\end{align*}

Consider another agent $i$ that is a finite distance $r$ from $v_\theta$. Denote the agents along the path from $v_\theta$ to $i$ as $a_0, a_1, \ldots, a_r,$ where $a_0 = v_\theta$ and $a_r= i$. From induction, agent $i$ has belief 
\begin{align*}
    \mu_{i,t+r}(\theta) &\leq \frac{\min\{\mu_{j,t+r-1}(\theta)\}_{j\in \mathcal{N}_i}}{\eta_{i,t+r}(\theta^*)}
    \leq \frac{\mu_{a_{r-1},t+r-1}(\theta)}{\eta_{i,t+r}(\theta^*)} \\
    &\leq \frac{\mu_{a_{r-2},t+r-2}(\theta)}{\eta_{a_r,t+r}(\theta^*)\eta_{a_{r-1},t+r-1}(\theta^*)} \\
    &\leq \frac{\mu_{v_\theta,t}(\theta)}{\prod_{\tau = 1}^r \eta_{a_\tau,t+\tau}(\theta^*)}.
\end{align*}
\end{proof}
From \eqref{eqn: false mu upper prod}, for $t \geq r$ and $i \in \mathcal{V}_a \setminus \{v_\theta\}$, we obtain 
\begin{equation}\label{eqn: mu lower bound, vtheta and sum}
    -\log \frac{\mu_{i,t}(\theta)}{t} \geq -\frac{\log \mu_{v_\theta, t-r}(\theta)}{t} + \sum_{\tau = 1}^ r \frac{\log \eta_{a_\tau, t-r+\tau}(\theta^*)}{t}. 
\end{equation}

From \eqref{eqn: global update}, we have 
\begin{equation}
    \mu_{v_\theta,t-r}(\theta) \leq \frac{\pi_{v_\theta,t-r}(\theta)}{\eta_{v_\theta,t-r}(\theta^*)}.
\end{equation}
Substituting the above inequality into \eqref{eqn: mu lower bound, vtheta and sum}, we have 
\begin{multline}\label{eqn: 39}
   -\log \frac{\mu_{i,t}(\theta)}{t} \geq - \log \frac{\pi_{v_\theta,t-r}(\theta)}{t} + \log \frac{\eta_{v_\theta,t-r}(\theta^*)}{t} \\
   + \sum_{\tau = 1}^r \frac{\log \eta_{a_\tau, t-r+\tau}(\theta^*)}{t}.
\end{multline}

From Lemma \ref{lemma: lower bound}, there exist $\Bar{\Omega} \subseteq \Omega$ and for each $\omega \in \Bar{\Omega}$, there exist $\eta(\omega) \in (0,1)$ and $t'(\omega) \in [r,\infty)$, such that $\forall t \geq t'(\omega), \forall i \in \mathcal{V}_a$,
$\eta_{i,t}(\theta^*) \geq \eta(\omega)$. As a result, we can see that the second term in the RHS of \eqref{eqn: 39} converges to 0.

Observing the absolute value of the third term on the RHS of \eqref{eqn: 39}, it converges to 0 along $\omega$, as $t \to \infty$,
\begin{align*}
    \left|\sum_{\tau = 1}^r \frac{\log \eta_{a_\tau, t-r+\tau}(\theta^*)}{t}\right| 
    % &\leq \frac{r}{t} \left|\log \eta_{a_\tau, t-r+\tau}(\theta^*)\right| \\
    &\leq \frac{r}{t} \log \frac{1}{\eta(\omega)}.
\end{align*}

Considering the first term on the right of \eqref{eqn: 39} and taking its limit, if $v_\theta \in \mathcal{S}(\theta^*,\theta)$, we obtain the following.
\begin{equation}
    \lim_{t\to\infty} - \log \frac{\pi_{v_\theta,t-r}(\theta)}{t} \geq \lim_{t\to\infty} -\frac{1}{t-r} \rho_{v_\theta,t-r}(\theta) = D_{v_\theta}(\theta^*,\theta),
\end{equation}
where $\rho_{v_\theta,t-r}$ is defined in the proof of Theorem \ref{Thm: local}. Similarly, if $v_\theta \in \mathcal{U}^{\theta^*}(\theta)$, we have 
\begin{equation}
    \lim_{t\to\infty} - \log \frac{\pi_{v_\theta,t-r}(\theta)}{t} 
    % \geq \lim_{t\to\infty} -\frac{1}{t-r} \sigma_{v_\theta,t-r}(\theta) 
    \geq \max_{\hat{\theta}\in \Theta_i} D_{v_\theta}^{\theta^*}(\hat{\theta},\theta).
\end{equation}

% Recall that we selected $v_\theta$ who has the largest $R_{v_\theta}=\max_{i \in \mathcal{S}(\theta^*,\theta) \cup \mathcal{U}^{\theta^*}(\theta)} \{D_i(\theta^*,\theta), \max_{\hat{\theta} \in \Theta_i}D_i^{\theta^*}(\hat{\theta}, \theta)\}$. 
Taking the inferior limit of \eqref{eqn: 39}, we complete the proof.

\end{document}